\documentclass[letterpaper, 10 pt, journal, twoside]{IEEEtran}
\IEEEoverridecommandlockouts

 \usepackage{algorithmic}
\usepackage{amsmath}
\usepackage{amssymb}
\usepackage{amstext}

\usepackage{color}
\usepackage{url}


\renewcommand{\vec}[1]{ {\mathbf{#1}}}


\usepackage{multirow}
\setlength{\columnsep}{0.24in}

\newcommand{\btau}   {\mbox{\boldmath $\tau$}}

\newcommand{\vnu} {\boldsymbol{\nu}}
\newcommand{\vnud} {\dot{\boldsymbol{\nu}}}

\newcommand{\T}{^\top}

\newcommand \se     {\mathrm{se}}
\newcommand \SE     {\mathrm{SE}}

\newcommand{\bzero}{\mbox{\boldmath $0$}}



\usepackage{mdframed}

{\end{tabbing} \end{mdframed} \vspace{5px}}


\newcommand{\vc}{\vec{c}}
\newcommand{\vg}{\vec{g}}

\newcommand{\vI}{{\mbox{\boldmath$\mathit{I}$}}}
\newcommand{\pInertia}{{\mbox{\boldmath$\mathit{J}$}}}

\newcommand{\BM}{\begin{bmatrix}}
\newcommand{\EM}{\end{bmatrix}}




\newcommand{\bone}{{\bf 1}}

\newcommand{\vA}{\vec{A}}

\newcommand{\vB}{\vec{B}}

\newcommand{\vC}{\vec{C}}

\newcommand{\vE}{\vec{E}}
\newcommand{\vf}{\vec{f}}

\newcommand{\vh}{\vec{h}}

\newcommand{\vH}{\vec{H}}

\newcommand{\vIbar}{\,\bar{\!\vI}}
\newcommand{\vJ}{\vec{J}}

\newcommand{\vY}{\vec{Y}}
\newcommand{\vq}{\vec{q}}
\newcommand{\vpi}{\boldsymbol{\pi}}

\newcommand{\vR}{\vec{R}}

\newcommand{\vS}{\vec{S}}

\newcommand{\vv}{\vec{v}}

\newcommand{\vX}{\vec{X}}

\newcommand{\vx}{\vec{x}}
\newcommand{\vy}{\vec{y}}
\newcommand{\vz}{\vec{z}}


\newcommand{\beq}{\begin{equation}}
\newcommand{\eeq}{\end{equation} }

\usepackage{graphicx}
\usepackage{empheq}
\usepackage{amsthm}
\usepackage{bbding}
\usepackage{nicefrac}

\newtheorem{theorem}{Theorem}
\newtheorem{definition}{Definition}
\newtheorem{proposition}{Proposition}
\newtheorem{remark}{Remark}
\newtheorem{corollary}{Corollary}
\usepackage{qting}
\usepackage{hyperref}

\newcommand{\revone}[1]{{\color{black}#1}}

\markboth{See official version at \url{http://dx.doi.org/10.1109/LRA.2017.2729659}}
{Wensing \MakeLowercase{\textit{et al.}}: Linear Matrix Inequalities for Physically-Consistent Inertial Parameter Identification}

\newcommand{\SO}[1]{\mathsf{SO}(#1)}
\renewcommand{\SE}[1]{\mathsf{SE}(#1)}
\newcommand{\so}[1]{\mathsf{so}(#1)}
\renewcommand{\se}[1]{\mathsf{se}(#1)}

\newcommand{\vel}{\mbox{\boldmath $v$}}

\title{\q{title}{\revone{Linear Matrix Inequalities for Physically-Consistent \revone{Inertial} Parameter Identification: A Statistical Perspective on the Mass Distribution}}}


\author{Patrick~M.~Wensing$^{1}$, Sangbae Kim$^{2}$, and Jean-Jacques E. Slotine$^{2}$ 
\thanks{$^{1}$Patrick M. Wensing is with the Department of Aerospace and Mechanical Engineering,
        University of Notre Dame, Notre Dame, IN 46556 USA 
{\tt\small  pwensing@nd.edu}}%
\thanks{$^{2} $Sangbae Kim and Jean-Jacques E. Slotine are with the Department of Mechanical Engineering,
         MIT, Cambridge, MA 02139 USA
{\tt\small \{sangbae,jjs\}@mit.edu}}%

\thanks{Digital Object Identifier (DOI): see top of this page.} \vspace{-15px}}

\begin{document}

\maketitle
~\\[-4ex]
\begin{abstract}

With the increased application of model-based whole-body control in legged robots, there has been a resurgence of research interest into methods for accurate system identification. An important class of methods focuses on the {\em \revone{inertial} parameters} of rigid-body systems. These parameters consist of the mass, first mass moment (related to center of mass location), and rotational inertia matrix of each link. The main contribution of this paper is to formulate physical-consistency constraints on these parameters as Linear Matrix Inequalities (LMIs). The use of these constraints in identification can accelerate convergence and increase robustness to noisy data. It is critically observed that the proposed LMIs are expressed in terms of the covariance of the mass distribution, rather than its rotational moments of inertia. With this perspective, connections to the classical problem of moments in mathematics are shown to yield new bounding-volume constraints on the mass distribution of each link. While previous work ensured physical plausibility or used convex optimization in identification, the LMIs here uniquely enable both advantages. Constraints are applied to identification of a leg for the MIT Cheetah 3 robot. Detailed properties of transmission components are identified alongside link inertias, with parameter optimization carried out to  global optimality through semidefinite programming. 

\begin{IEEEkeywords}
Dynamics, Calibration and Identification
\end{IEEEkeywords}



\end{abstract}


\maketitle

\vspace{-16px}
\section{Introduction}
\vspace{-0px}

\IEEEPARstart{A}{dvances} in whole-body control of legged robots \cite{Abe07,Park07,SentisKhatib10,Wensing13} have led to increased use of model-based methods in experimental hardware \cite{Hutter14,Mansard15b,Kuindersma15,Herzog2016}. 
Commonly, state-of-the-art controllers perform optimization over actuator torques to generate desired motions in the robot. Recent strides in torque-controlled actuation have provided wide benefit to these techniques, and emerging actuator designs~\cite{Wensing17b} suggest that performance will continue to improve.
Despite these advances, the performance of whole-body control methods remains dependent on accurate dynamic models.
	
Thus, recent trends in control have been accompanied by many parallel developments in the area of system identification \cite{Pucci15,Ayusawa14,Jovan16,Sousa14,Traversaro16}.
This recent work follows a rich history of research into \revone{inertial} parameter identification, with seminal work in \cite{Atkeson86}.  Classically, studies have focused on challenges such as designing trajectories for optimal identification \cite{Gautier91,Swevers97}, limiting bias from structured noise \cite{Ting06,Janot14}, and verifying robustness bounds on the identified parameters \cite{Calafiore00,Poignet05}. By comparison, recent work in the legged locomotion community has focused on challenges from the floating-base structure of legged robots \cite{Pucci15,Ayusawa14} and the use of constrained optimization to limit the search space to physically realistic parameters \cite{Jovan16,Sousa14,Traversaro16}.
	
		

	The floating-base structure of legged systems introduces challenges and opportunities to system identification. Pucci et al.~\cite{Pucci15} addressed coupling between the limbs and body to extend methods of Slotine and Li \cite{Slotine87} for underactuated adaptive control. Other work has exploited the fact that the Newton and Euler equations of the entire robot are embedded in the dynamics of the floating base. With this property, Ayusawa et al.~\cite{Ayusawa14, Ayusawa11} demonstrated that full-body \revone{inertial} parameters can be estimated from contact forces and kinematics alone. These advances open the door for application to systems where joint torque measurements are not available, such as for identifying inertial parameters in humans.

	
\begin{figure}
\center
\vspace{-6px}
\includegraphics[width = .38 \columnwidth]{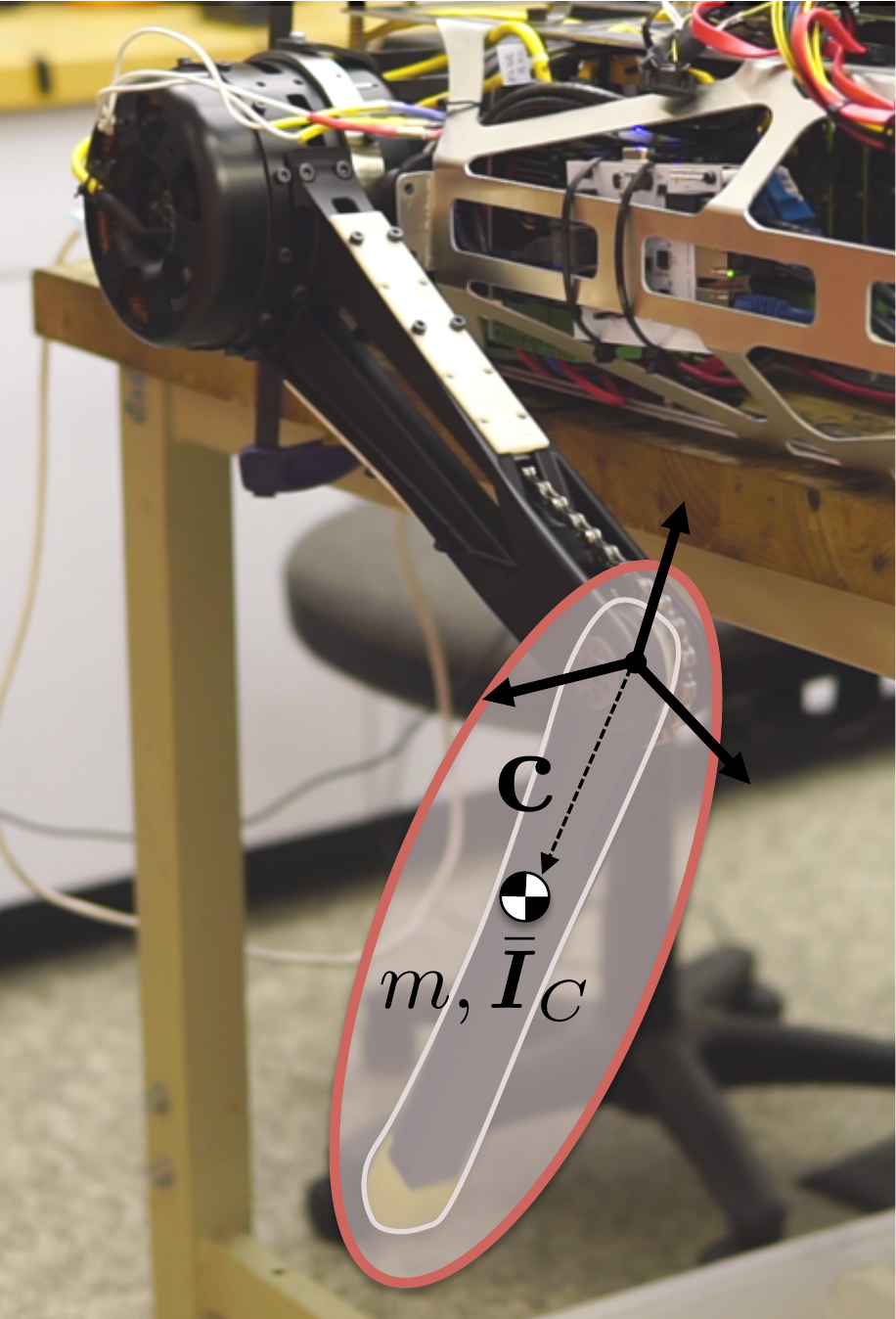}\vspace{-5px}
\caption{Setup for system identification on the MIT Cheetah 3. \q{caption1}{\revone{This work provides new constraints to identify physically plausible masses $m$, center of mass locations $\vc$, and rotational inertias $\vIbar_{C}$ through convex optimization.}} 
}
\label{fig:leg}
\vspace{-13.5px}
\end{figure}	
	
	Other recent work has concentrated on using constrained optimization in identification. Often, parameters are known a priori to exist within some predefined set. Such restrictions may come from considerations of physical plausibility, from bounding volumes in CAD models, or from known symmetry of a mechanism. Regardless of the source, prior knowledge can be applied to increase robustness to noisy sensors, and to accelerate model convergence \cite{Ioannou12,Slotine86}. Convex constraints, in particular, provide desirable structure that can be naturally exploited in parameter identification and adaptive control alike. For identification, Jovic et al.~\cite{Jovan16} formed bounding constraints on the center of mass (CoM) of each link, and enforced symmetry of mirrored limbs. Sousa and Cortes\~ao \cite{Sousa14} formulated LMIs to enforce positivity of the kinetic energy. This constraint alone does not guarantee physical consistency. Recently, Traversaro et~al.~\cite{Traversaro16} described tight conditions for physical consistency of \revone{inertial} parameters using a nonconvex parameterization. Optimization on manifolds was required to enforce these conditions. 

	

\q{contribution1}{\revone{A main contribution of this paper is to show how physical-consistency constraints can be expressed as LMIs on the \revone{inertial} parameters.}}
\revone{It is shown that manifold constraints from \cite{Traversaro16} can be reformulated as convex constraints through LMIs.}
\q{ourmethod}{\revone{The proposed LMIs uniquely enable globally-optimal least-squares parameter identification while enforcing plausibility of the result.}}
\q{stochasticpre}{\revone{From the form of the constraints, it is critically observed that physical consistency depends only on the covariance of the mass distribution.}}

\q{stochastic}{\revone{This main observation stems from mathematical commonality between mass measures of rigid bodies and probability measures of random variables.
This is not to say that a rigid-body has inherent stochasticity. But rather, that certifying the plausibility of moments falls to the same mathematics in both domains.}}
\q{contribution2}{\revone{This connection enables results from the classical problem of moments \cite{Fialkow10} to yield new bounding-ellipsoid constraints on the mass distribution. Formulation of these constraints is a second main contribution of the work.}} 
\q{toInfinity}{\revone{Although constraints are placed on the mass distribution, itself an infinite dimensional mathematical object, LMIs enable this restriction to be imposed directly on the 10 standard \revone{inertial} parameters during system identification.}}





The paper is laid out as follows. Section \ref{sec:prelim} provides mathematical preliminaries. Section \ref{sec:theory} reviews \cite{Sousa14} and \cite{Traversaro16}, with a high-level comparison in Table \ref{tab:compare}. Section~\ref{sec:mainresults} presents LMIs for physical consistency, highlighting connections with probability and statistics. 
Section~\ref{sec:Extensions} draws on these connections to introduce bounding-volume constraints. Section~\ref{sec:results} describes application to identify a leg from the MIT Cheetah~3, shown in Fig.~\ref{fig:leg}. Section \ref{sec:conclusions} provides concluding remarks.


\vspace{-5px}
\section{Preliminaries}
\vspace{-0px}
\label{sec:prelim}

\subsection{Notation and Definitions}

The set of real and natural numbers are denoted by $\mathbb{R}$ and $\mathbb{N}$ respectively.   $\mathbb{R}_{+}$ represents the set of non-negative reals. Scalars are denoted with italics $(a,b,\ldots)$, vectors with bold characters $({\bf a}, {\bf b},\ldots)$, and matrices with bold capitals $({\bf A}, {\bf B}, \ldots)$. The $n\times n$ identity is noted as $\bone_n$.
The Special Orthogonal group of rotations is denoted $\SO{3}$, with its Lie algebra, the set of $3\times3$ skew-symmetric matrices $\so{3}$. The Special Euclidean group  is denoted $\SE{3}$ with its Lie algebra  $\se{3}$.
The set of symmetric $n\times n$ matrices is represented as $\mathbb{S}^n$, the positive semidefinite cone $\mathbb{S}^n_+$, and the positive definite cone $\mathbb{S}^n_{++}$ \cite{BoydVanberghe04}. The shorthand $\vA \succeq \vB$ indicates $\vA-\vB \in \mathbb{S}_+^n$ for some $n\in\mathbb{N}$.  $\vA \succ \vB$ similarly indicates $\vA-\vB \in \mathbb{S}^n_{++}$.
\begin{definition}[LMI Representable]
A convex set $\mathcal{S} \subset \mathbb{R}^n$ is called linear matrix inequality (LMI) representable if there exists $m\in \mathbb{N}$ and constant matrices $\{\vA_i\}_{i=0}^n \in  \mathbb{S}^m$ with
\[
\mathcal{S} = \{ \vx \in \mathbb{R}^n ~:~ \vA_0 + x_1 \vA_1 + \cdots + x_n \vA_n \succeq \bzero\}
\]
A set is called {\em strictly} LMI representable when the inequality can be tightened to hold strictly.
\end{definition}
\noindent \q{lmiinfo}{\revone{As a main benefit, a convex set $\mathcal{S}$ being LMI representable has favorable implications for optimization. Constraints of the form $\vx \in \mathcal{S}$ can be enforced using semidefinite programming~\cite{BoydVanberghe04}. Such techniques are mature, and admit guarantees of global optimality. We refer the interested reader to \cite{Boyd94} for further background on LMIs and their applications.} }  

\begin{table}
\center
\vspace{6px}
\begin{tabular}{ccccc}
\hline 
                       & Physical        &          & Discrete & Uses \\
                       & Consistency    &  Convex  & Approx.  & LMIs \\ \hline
Sousa {\em et al.} \cite{Sousa14}   &          & \checkmark  & No   & \checkmark            \\
Traversaro {\em et al.} \cite{Traversaro16} & \checkmark & & No &  \\
Ayusawa {\em et al.}  \cite{Ayusawa11}   & \checkmark & \checkmark & Yes &\\ 
This Paper			    & \checkmark & \checkmark & No & \checkmark\\ \hline\\[-1ex]
\end{tabular}
\caption{Feature comparison. }
\label{tab:compare}
\vspace{-20px}
\end{table}

\revone{
\begin{definition}[Moment of a Measure] Consider a positive Borel measure $\mu$ on $\mathbb{R}^n$, $\boldsymbol{\alpha} = [\alpha_1, \ldots, \alpha_n]\T \in \mathbb{N}_+^n$, and let $\vx^{\boldsymbol{\alpha}} = x_1^{\alpha_1} \cdots x_n^{\alpha_n}$ for any $\mathbf{x} = [x_1, \ldots, x_n]\T \in \mathbb{R}^n$. Then
\[
 \mbox{\Large$\int$}_{\!\!\mathbb{R}^n}\, \mathbf{x}^{\boldsymbol{\alpha}}\, {\rm d} \mu(\vx)
\]
is called a moment of $\mu$, of order $|\boldsymbol{\alpha}| = \sum\limits_{i=1}^{n} \alpha_i$~~~\cite{Fialkow10}.
\label{def:moment}
\end{definition}
\noindent Any function $f:\mathbb{R}^n \rightarrow \mathbb{R}_+$ can be used to define a measure through the association ${\rm d} \mu(\vx) = f(\vx) {\rm d} \vx$, where ${\rm d} \vx$  represents a differential volume.}
\q{conceptMoment}{\revone{The general concept of a moment is applicable to both probability measures and mass measures. This commonality will be used for intuition into the LMIs that enforce plausibility of \revone{inertial} parameters. }}
 
\vspace{-10px}
\subsection{Rigid-Body Dynamics}


The dynamics of a system of $n_b\in \mathbb{N}_+$ rigid bodies follows
\begin{equation}
\vH(\vq)\, \vnud + \vC(\vq,\vnu)\, \vnu + \vg(\vq) = \btau
\label{eq:eom}
\end{equation}
where $\vH \in \mathbb{R}^{n_d \times n_d}$ the mass matrix, $n_d \in \mathbb{N}_+$ the number of degrees of freedom, $\vq \in \mathcal{Q}$ the configuration with $\mathcal{Q}$ the configuration manifold, $\vnu \in \mathbb{R}^{n_d}$ the generalized velocity, $\vC\,\vnu \in \mathbb{R}^{n_d}$ and $\vg \in \mathbb{R}^{n_d}$ the Coriolis and gravity forces, and $\boldsymbol{\tau} \in \mathbb{R}^{n_d}$ the generalized force \cite{Featherstone08}. For legged systems, the generalized force $\btau$ has contributions from $n_j\in\mathbb{N}_+$ joint actuator torques $\btau_j\in\mathbb{R}^{n_j}$ and $n_c \in \mathbb{N}$ external contact wrenches $\{ \vf_{c_k} \}_{k=1}^{n_c} \subset \mathbb{R}^6$
according to
\[
\btau = \vS_j\T \btau_j +  \mbox{$\sum\limits_{k=1}^{n_c}$} \vJ_{c_k}\T \vf_{c_k}
\]
where $\vS_j \in \mathbb{R}^{n_j \times n_d}$ is an actuated joint selector matrix and $\vJ_{c_k} \in \mathbb{R}^{6 \times n_d}$ the 6D Jacobian for contact $k$.  

\revone{It is commonly known that \eqref{eq:eom} can be represented linearly in system \revone{inertial} parameters $\vpi \in \mathbb{R}^{10 n_b}$ \cite{Atkeson86}
\begin{equation}
\vH(\vq)\, \vnud + \vC(\vq,\vnu)\, \vnu + \vg(\vq) = \vY(\vq, \vnu, \vnud)\, \vpi 
\label{eq:regressor}
\end{equation}
where $\vY$ is the regressor matrix. The parameters $\vpi$ have contributions from each body such that $\vpi = [\vpi_i\T, \ldots, \vpi_{n_b}\T]\T$. The body \revone{inertial} parameters $\vpi_i \in \mathbb{R}^{10}$ are composed as
\[
\vpi_i = [m, h_x, h_y, h_z, I_{xx}, I_{xy}, I_{xz}, I_{yy}, I_{yz}, I_{zz}]\T \in \mathbb{R}^{10}
\]
with $m$ the body mass, $\vh = [h_x, h_y, h_z]\T = m \vc$ the first mass moment with $\vc \in \mathbb{R}^3$ the vector to the CoM in a body-fixed coordinate system, and   
\[
\vIbar = \left[ \begin{smallmatrix} I_{xx} & I_{xy} & I_{xz} \\ I_{xy} & I_{yy} & I_{yz} \\ I_{xz} & I_{yz} & I_{zz} \end{smallmatrix} \right]
\]
the rotational inertia about the coordinate origin. These parameters also describe the 6D (spatial) body inertia \cite{Featherstone08}
\begin{equation}
\vI_i = \begin{bmatrix} \vIbar_i & m_i\vS(\vc_i) \\ m_i \vS(\vc_i)\T & m_i \bone_3 \end{bmatrix} = \begin{bmatrix} \vIbar_i & \vS(\vh_i) \\ \vS(\vh_i)\T & m_i \bone_3 \end{bmatrix}
\label{eq:spatial6}
\end{equation}
with $\vS(\vx) \in \so{3}$ such that $\vS(\vx) \vy = \vx \times \vy$,  $\forall\,\vx,\vy\in\mathbb{R}^3$. }

 The regressor matrix provides a simple method to pursue \revone{inertial} parameter identification. 
Given $n_s\in\mathbb{N}_+$ samples, a  least-squares identification problem can be formulated \cite{Sousa14}
\begin{equation}
 \min_{\vpi}~ \mbox{$\sum\limits_{m=1}^{n_s}$}~ \| \vY^{(m)}\, \vpi -  \boldsymbol{\tau}^{(m)}  \|^2
\label{eq:opt}
\end{equation}
This optimization problem is efficiently solvable to global optimality. However, without including constraints, the optimal parameters may not correspond to any physical system.

\revone{Inertial} parameters in any physical body are determined by a distribution of density $\rho_i(\cdot) : \mathbb{R}^3 \rightarrow \mathbb{R}_{+}$. The \revone{inertial} components for each body $i$ are a functional of $\rho_i(\cdot)$ \cite{Traversaro16}
\begin{align}
m_i &= \int_{\mathbb{R}^3} \rho_i(\vx)\, \textrm{d}\vx \label{eq:density1}\\
\revone{\vh_i}  &= \int_{\mathbb{R}^3} \vx\, \rho_i(\vx)\, \textrm{d}\vx   \label{eq:density2}\\
\revone{\vIbar_i} 
&\revone{=  \int_{\mathbb{R}^3} \underbrace{ \left[ \begin{smallmatrix}y^2 + z^2 & -xy & -xz \\ -xy & x^2 + z^2 & -yz \\ -xz & -yz & x^2+y^2 \end{smallmatrix} \right]}_{\vS(\vx)\vS(\vx)\T} \rho_i(\vx)\,\textrm{d}\vx }
\label{eq:density3}
\end{align}
\revone{The moments of inertia within $\vIbar_i$ are not moments of $\rho(\cdot)$ in the sense of Definition~\ref{def:moment}. While $\vIbar_i$ is convenient to describe dynamics, the physical plausibility of \revone{inertial} parameters will be more directly addressed with moments as in Definition~\ref{def:moment}.}
\begin{definition}[Density Realizable] Given a set $\mathcal{X} \subseteq \mathbb{R}^3$, a 6D inertia  $\vI$ is called $\mathcal{X}$-density realizable if $\exists \rho(\cdot):\mathbb{R}^3 \rightarrow \mathbb{R}_{+}$ such that $\rho(\vx)=0$ when $\vx \notin \mathcal{X}$, and the components of $\vI$, $(m, \revone{\vh}, \vIbar)$, satisfy \eqref{eq:density1}-\eqref{eq:density3}. When $\mathcal{X}$ is not specified, $\mathcal{X}=\mathbb{R}^3$ is assumed.
\end{definition}

\begin{remark} Given any $\mathcal{X}\subseteq \mathbb{R}^3$, the set $\mathcal{P}^*_{\mathcal{X}}$ defined by
$\mathcal{P}^*_\mathcal{X} = \{\vpi \in \mathbb{R}^{10} \,:\, m(\vpi)>0,~ \vI(\vpi) {\rm~is~} \mathcal{X}{\rm-density~realizable}\}$ is a convex cone.  
Previous work \cite{Ayusawa11} provided a discrete approximation to this cone. Without discretization, the work here provides cases wherein the cone is LMI representable.  
\end{remark}


\vspace{-3px}
\section{Previous Results}
\label{sec:theory}
\vspace{-0px}

This section focuses on physical consistency for a single rigid body. As such, body indices will be dropped. Attempts to enforce physical consistency focus on the rotational inertia $\vIbar_C$ about the CoM. The parallel axis theorem in 3D establishes a correspondence between $\vIbar_C$ and $\vIbar$, the rotational inertia about a body-fixed coordinate origin, through
\begin{equation}
\vIbar = \vIbar_C + m \vS(\vc) \vS(\vc)\T
\label{eq:parallelaxis}
\end{equation}

\subsection{Physical Semi-consistency: An LMI Parameterization}
Positive definite constraints on $\vIbar_C(\vpi)$ have been commonly enforced   on the \revone{inertial} parameters $\vpi$ \cite{Sousa14,Slotine89b}. In a rigid-body system, when each $m_i>0$ and $\vIbar_{C_i} \succ 0$, it can be shown that $\vH(\vq) \succ 0$ $\forall\vq \in \mathcal{Q}$ \cite{Slotine89b}. 
\revone{However, these constraints are not alone enough to ensure physical consistency of the \revone{inertial} parameters \cite{Traversaro16}.}


\begin{definition}[Physical Semi-consistency] A vector of \revone{inertial} parameters $\vpi \in \mathbb{R}^{10}$ is called {\em  physically semi-consistent} if $m(\vpi) > 0$ and $\vIbar_C(\vpi) \succ 0$. The set of physically semi-consistent parameters is denoted ${\mathcal{P} \subset \mathbb{R}^{10}}$.
\end{definition}

\begin{theorem}[LMI Representation of $\mathcal{P}$] \label{lmi:semi}\cite{Sousa14}
The set of physically semi-consistent parameters $\mathcal{P}$ is  strictly LMI representable. Its LMI representation is given as
$\mathcal{P} = \left \{ \vpi \in \mathbb{R}^{10} \,:\, \vI(\vpi) \succ 0\right\}$. 
\end{theorem}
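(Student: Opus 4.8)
The plan is to show that the map $\vpi \mapsto \vI(\vpi)$ is affine in $\vpi$, so that the condition $\vI(\vpi) \succ 0$ is literally a strict LMI in the standard form of the \emph{LMI representable} definition, and then to prove that $\vI(\vpi) \succ 0$ is \emph{equivalent} to physical semi-consistency, i.e.\ to $m(\vpi) > 0$ together with $\vIbar_C(\vpi) \succ 0$. The first part is immediate from \eqref{eq:spatial6}: each entry of the $6 \times 6$ matrix $\vI$ is either zero, or one of $m, h_x, h_y, h_z$, or a signed entry of $\vIbar$, hence a linear function of the components of $\vpi$. Writing $\vI(\vpi) = \sum_{k=1}^{10} \pi_k \, \vA_k$ with each $\vA_k \in \mathbb{S}^6$ constant (and $\vA_0 = \bzero$) exhibits the required form, so the only real content is the equivalence.

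For the equivalence I would use the Schur complement. Partition $\vI(\vpi)$ as in \eqref{eq:spatial6} with blocks $\vIbar$ (top-left $3\times 3$), $\vS(\vh)$ (top-right), $\vS(\vh)\T$ (bottom-left), and $m\bone_3$ (bottom-right). First suppose $\vpi$ is physically semi-consistent, so $m > 0$ and $\vIbar_C \succ 0$. Since $m > 0$ the bottom-right block $m\bone_3$ is positive definite, and the Schur complement of that block is
\[
\vIbar - \vS(\vh)\,(m\bone_3)^{-1}\,\vS(\vh)\T = \vIbar - \tfrac{1}{m}\vS(\vh)\vS(\vh)\T = \vIbar - m\,\vS(\vc)\vS(\vc)\T = \vIbar_C,
\]
using $\vh = m\vc$ and the parallel axis theorem \eqref{eq:parallelaxis}. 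Because $m\bone_3 \succ 0$ and this Schur complement equals $\vIbar_C \succ 0$, the Schur complement lemma gives $\vI(\vpi) \succ 0$. Conversely, if $\vI(\vpi) \succ 0$, then every principal submatrix is positive definite; in particular the bottom-right $m\bone_3 \succ 0$ forces $m > 0$, and then the same Schur complement computation shows $\vIbar_C = \vIbar - \tfrac1m \vS(\vh)\vS(\vh)\T \succ 0$. Hence $\vpi \in \mathcal{P}$. This establishes $\mathcal{P} = \{\vpi : \vI(\vpi) \succ 0\}$, and the affine form above shows this set is strictly LMI representable.

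The main obstacle, such as it is, is purely bookkeeping: one must verify carefully that $\vIbar_C$ as computed via the Schur complement coincides with the $\vIbar_C$ defined by the parallel axis theorem, i.e.\ that $\tfrac1m \vS(\vh)\vS(\vh)\T = m\,\vS(\vc)\vS(\vc)\T$, which is immediate from $\vh = m\vc$ and bilinearity of $\vS(\cdot)\vS(\cdot)\T$ in its argument (it is quadratic, so the scalar $m$ comes out squared and $\tfrac1m \cdot m^2 = m$). One should also note that the Schur complement lemma requires the pivot block to be strictly positive definite, which is exactly why the case $m = 0$ (excluded by semi-consistency) needs no separate treatment: positive definiteness of $\vI(\vpi)$ already precludes it. No further subtlety arises, since the strictness is preserved in both directions of the Schur complement equivalence.
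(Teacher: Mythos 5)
Your proof is correct. The paper itself does not prove this statement---it is quoted from \cite{Sousa14}---but your argument (linearity of $\vI(\vpi)$ in $\vpi$, then the Schur complement with respect to the $m\bone_3$ block, identified with $\vIbar_C$ via $\vh = m\vc$ and the parallel axis theorem \eqref{eq:parallelaxis}) is the standard one, and it is precisely the same Schur-complement technique the paper uses for the pseudo-inertia $\pInertia(\vpi)$ in its proof of Theorem~\ref{thm:main}.
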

Extending the optimization \eqref{eq:opt} to include physical semi-consistency constraints results in a semidefinite programming (SDP) problem. This problem can be solved to global optimality with SDP solvers \cite{BoydVanberghe04}
\begin{align}
\min_{\vpi} ~& \sum_{m} \| \vY^{(m)}\, \vpi -  \boldsymbol{\tau}^{(m)} \|^2 \nonumber \\ 
			\textrm{s.t.}~& \vI(\vpi_i) \succ 0 \quad \forall i \in \{1,\ldots,n_b\} \nonumber
\end{align}

\subsection{(Full) Physical Consistency: Manifold Parameterization}
\begin{definition}[Physical Consistency] A vector of \revone{inertial} parameters $\vpi \in \mathbb{R}^{10}$ is called {\em physically consistent} if $m(\vpi) > 0$  and  $\vI(\vpi)$ is density realizable.
The set of physically-consistent parameters is denoted $\mathcal{P}^*$.
\end{definition}

In comparison to $\mathcal{P}$, the set of physically consistent \revone{inertial} parameters $\mathcal{P}^* \subset \mathcal{P}$ has been shown  to result from only three additional conditions on $\vIbar_C$ \cite{Traversaro16}. These additional constraints arise from considerations regarding the principal moments of inertia. Suppose $\vR \in \mathsf{SO}(3)$ and $\vJ = \textrm{diag}(J_1, J_2, J_3)$, $J_1,...,J_3>0$ such that $\vIbar_C = \vR \vJ \vR\T$. Then, the rotational inertia $\vIbar_C$ is density realizable iff\begin{equation}
J_1+J_2 \ge J_3,~ J_2+J_3 \ge J_1,~\textrm{and } J_1 + J_3 \ge J_2 
\label{eq:triangleIneq}
\end{equation}
\begin{definition}[Triangle Inequalities]
A matrix in $\mathbb{S}^3$ is said to satisfy the triangle inequalities if its eigenvalues $\{J_i\}_{i=1}^3$ satisfy \eqref{eq:triangleIneq}.
\end{definition}

\begin{theorem}[Manifold Parameterization of $\mathcal{P}^*$] \cite{Traversaro16} 
A 6D inertia $\vI$ is physically consistent if and only if there exists $m>0$, $\vR \in \mathsf{SO}(3)$, $\vJ = {\rm{diag}}(J_1, J_2, J_3) \succ 0$ that satisfies the triangle inequalities, and $\vc \in \mathbb{R}^3$ such that 
\[
\vI = \begin{bmatrix} \vR \vJ \vR\T + m \vS(\vc) \vS(\vc)\T & m \vS(\vc) \\ m \vS(\vc)\T & m \bone_3 \end{bmatrix}
\]
\end{theorem}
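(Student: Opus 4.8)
The plan is to prove the two implications separately, using the parallel-axis relation \eqref{eq:parallelaxis} to transfer the whole question onto the central inertia $\vIbar_C$: by \eqref{eq:spatial6} and \eqref{eq:parallelaxis}, the asserted block form of $\vI$ is equivalent to the statements that $m>0$, the centroid equals $\vc$, and $\vIbar_C = \vR\vJ\vR\T$ for some $\vR\in\SO{3}$ and $\vJ=\diag(J_1,J_2,J_3)\succ 0$ obeying the triangle inequalities.

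\emph{Sufficiency.} Given such a decomposition I would exhibit an explicit density realizing $(m,\vc,\vIbar_C)$. First, density realizability transforms covariantly under rigid motions: if $\rho_0$ realizes $(m,\vzero,\vJ)$ then $\vx\mapsto\rho_0\!\left(\vR\T(\vx-\vc)\right)$ realizes $(m,\vc,\vR\vJ\vR\T)$, so it suffices to realize a centroid-at-origin body with \emph{diagonal} central inertia $\vJ$. For that, take a homogeneous rectangular box $[-a,a]\times[-b,b]\times[-c,c]$ of total mass $m$, whose central inertia is $\tfrac{m}{3}\diag(b^2+c^2,\,a^2+c^2,\,a^2+b^2)$. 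Summing the three matched scalar equations gives $a^2+b^2+c^2=\tfrac{3}{2m}(J_1+J_2+J_3)$, whence $a^2=\tfrac{3}{2m}(J_2+J_3-J_1)$, $b^2=\tfrac{3}{2m}(J_1+J_3-J_2)$, and $c^2=\tfrac{3}{2m}(J_1+J_2-J_3)$ --- each nonnegative \emph{precisely} by the triangle inequalities, and strictly positive when those inequalities are strict. When an inequality is tight the box degenerates to a planar slab, which is not an $L^1$ density on $\mathbb{R}^3$; this boundary case I would handle by a limiting argument (approximating with a thin box and invoking closedness, or the convex-cone structure recorded in the Remark above), after pinning down which notion of ``density realizable'' is in force.

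\emph{Necessity.} Suppose $\vI$ is physically consistent, realized by a density $\rho$ with $m=\int\rho>0$ and centroid $\vc$. Introduce $\boldsymbol{\Sigma}:=\int_{\mathbb{R}^3}(\vx-\vc)(\vx-\vc)\T\rho(\vx)\,{\rm d}\vx\in\mathbb{S}^3_+$ (i.e.\ $m$ times the covariance of the mass distribution). It is in fact positive definite: a null direction $\vv$ would force $\rho$ to be supported on the plane $\{\vv\T\vx=\vv\T\vc\}$, contradicting $\int\rho>0$. Using $\vS(\vr)\vS(\vr)\T=\|\vr\|^2\bone_3-\vr\vr\T$ together with \eqref{eq:density3} and \eqref{eq:parallelaxis} yields the key identity $\vIbar_C=\mathrm{tr}(\boldsymbol{\Sigma})\,\bone_3-\boldsymbol{\Sigma}$. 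Diagonalizing $\boldsymbol{\Sigma}=\vR\,\diag(d_1,d_2,d_3)\,\vR\T$ with $\vR\in\SO{3}$ and all $d_i>0$ gives $\vIbar_C=\vR\,\diag(d_2+d_3,\,d_1+d_3,\,d_1+d_2)\,\vR\T$; setting $J_1=d_2+d_3$, $J_2=d_1+d_3$, $J_3=d_1+d_2$ then makes $\vJ=\diag(J_1,J_2,J_3)\succ0$ with $J_1+J_2-J_3=2d_3\ge0$ and cyclically, so the triangle inequalities hold, and \eqref{eq:parallelaxis} recovers the claimed block form of $\vI$.

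The main obstacle is the sufficiency direction: choosing a construction for which the triangle inequalities emerge \emph{exactly} as the realizability condition --- the homogeneous box does this cleanly, the crucial computation being the pairwise-sum inversion above --- and especially the degenerate boundary cases, where a planar mass distribution cannot be an honest $L^1$ density; this is the one genuinely delicate point, to be dispatched by a closure/limiting argument together with an explicit statement of the realizability convention. Necessity is then essentially bookkeeping once the identity $\vIbar_C=\mathrm{tr}(\boldsymbol{\Sigma})\bone_3-\boldsymbol{\Sigma}$ is in hand.
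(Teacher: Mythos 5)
This theorem is not proved in the paper at all: it is quoted from \cite{Traversaro16}, so there is no in-paper argument to compare against line by line. Your blind proof is a legitimate self-contained route, and in fact its necessity half is essentially the covariance machinery the paper develops later for its own results: your identity $\vIbar_C = \mathrm{Tr}(\boldsymbol{\Sigma}_C)\bone_3 - \boldsymbol{\Sigma}_C$ is \eqref{eq:IbarCasE}, and your pairwise-sum assignment $J_1 = d_2+d_3$, etc., is exactly the eigenvalue relationship \eqref{eq:eigrelationships} underlying Proposition \ref{prop:cov}. The reduction to a centred, diagonal $\vJ$ via \eqref{eq:spatial6} and \eqref{eq:parallelaxis}, the equivariance under rigid motions, and the homogeneous-box computation $a^2 = \tfrac{3}{2m}(J_2+J_3-J_1)$ (cyclically) are all correct, and the box does make the triangle inequalities appear exactly as the realizability condition.

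The one genuine soft spot is the degenerate case, and your proposed fix (``approximate with a thin box and invoke closedness'') does not work under the paper's literal convention. Density realizability here means existence of a nonnegative \emph{function} $\rho$ on $\mathbb{R}^3$ satisfying \eqref{eq:density1}--\eqref{eq:density3}, and that set is not closed: if, say, $J_1+J_2 = J_3$ exactly, then the corresponding eigenvalue of $\boldsymbol{\Sigma}_C$ is zero, so $\int (\vv\T(\vx-\vc))^2\rho(\vx)\,{\rm d}\vx = 0$ forces $\rho = 0$ almost everywhere off a plane, whence $m = \int\rho = 0$ --- no function density exists, and no limit of thin boxes produces one. So you must commit to a convention rather than defer it: either read ``density'' as a positive measure (then a uniform lamina or the four-point-mass representations of Theorem \ref{thm:RepresentabilityEllipse} realize the tight case directly, no limiting argument needed --- but then necessity no longer yields $\vJ \succ 0$, e.g.\ a point mass), or keep function densities, in which case your box argument is complete only for strict triangle inequalities and the theorem's non-strict boundary is genuinely unrealizable. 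This edge-case tension is inherited from the cited formulation in \cite{Traversaro16} and is immaterial for the paper's use of the result (Theorem \ref{thm:main} is stated with strict inequalities), but your write-up should resolve it explicitly rather than appeal to a closure argument that fails for function densities.
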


Even for a single rigid body, optimization with this parameterization of $\mathcal{P}^*$ results in a nonlinear optimization problem over a manifold
\begin{align}
\min_{\vR, \vJ, \vc, m} ~&\sum_{m} \| \vY^{(m)}\, \vpi(\vR, \vJ, \vc, m) -  \boldsymbol{\tau}^{(m)} \|^2 \nonumber\\
\textrm{s.t.}~& \vR \in \mathsf{SO}(3)  \nonumber\\
			  & m>0 ,~ J_i>0, i=1,2,3 \nonumber \\
			  & J_1+J_2 \ge J_3,~ J_2+J_3 \ge J_1,~\textrm{and } J_1 + J_3 \ge J_2 \nonumber
\end{align}
\q{Traversaro}{Solution of this problem is possible using nonlinear optimization on manifolds~\cite{Traversaro16}. However, this approach requires custom solvers and does not guarantee global optimality.}

\section{Contribution: An LMI for Physically Consistent \revone{Inertial} Parameters}
\label{sec:mainresults}
\revone{This section takes a closer look at conditions for physical consistency. It is shown that the triangle inequalities can be expressed as an LMI over $\vpi$, without a manifold parametrization. First, Section \ref{subsec:LMIIC} describes a matrix inequality for the triangle inequalities on $\vIbar_C$.} Intuition into this result is given in Section \ref{subsec:dwcov} through introduction of the {\em density-weighted covariance} of a rigid body and its \revone{\em covariance ellipsoid}. With this interpretation, Section \ref{sec:finallmi} develops an LMI over $\vpi$ for physical consistency. \revone{As a key benefit, this LMI enables the use of convex optimization to identify plausible parameters.} 

\subsection{A matrix inequality for triangle inequalities on $\vIbar_C$}
\label{subsec:LMIIC}

Suppose $\vR$ and $\vJ$ as before such that $\vIbar_C = \vR \vJ \vR\T$. The triangle inequalities on $\vIbar_C$ \eqref{eq:triangleIneq} can be rewritten as
\begin{equation}
 J_1 +J_2+J_3 \ge  2J_i~~~~\textrm{$i=1,...,3$}
\label{eq:newtriangle}
\end{equation}
\revone{Since $J_i$ are the eigenvalues of $\vIbar_C$, \eqref{eq:newtriangle} is equivalent to
\begin{equation}
\tfrac{1}{2}\textrm{Tr}( \vIbar_C ) \ge \lambda_{max}(\vIbar_C) 
\label{eq:maxeig}
\end{equation}
where $\textrm{Tr}(\cdot)$ is the trace operator, and $\lambda_{max}(\cdot)$ provides the maximum eigenvalue of its argument.} Separately, the eigenvalue inequality
$\lambda_{max}(\vIbar_C) ~\mathbf{x}\T \mathbf{x} \ge  \mathbf{x}\T \, \vIbar_C \,  \mathbf{x} $ implies
\begin{equation}
\lambda_{max}(\vIbar_C) \bone_3\succeq \vIbar_C 
\label{eq:maxEig2}
\end{equation}
Thus, through the use of \eqref{eq:maxEig2}, \eqref{eq:maxeig} is equivalent to
\begin{equation}
\tfrac{1}{2} \textrm{Tr}( \vIbar_C )\, \bone_3 - \vIbar_C \succeq 0 
\label{eq:lmiatc}
\end{equation}
\revone{Although \eqref{eq:lmiatc} is mathematically equivalent to the triangle inequalities on $\vIbar_C$, its intuitive meaning is hardly clear in this form. The next section builds towards this intuition.}

\vspace{-4px}
\subsection{The Density-Weighted Covariance of a Rigid Body}
\vspace{-2px}
\label{subsec:dwcov}
\revone{It will be shown that the mathematical condition \eqref{eq:lmiatc} can be interpreted as requiring a positive semidefinite covariance of the rigid-body mass distribution.} Towards this insight, $\vIbar_C$ can be expanded algebraically to verify 
\begin{align}
\vIbar_C &= \mbox{\Large$\int$}_{\!\!\mathbb{R}^3}\, \vS(\vx_c) \vS(\vx_c)\!\T \rho(\vx) \,\textrm{d}\vx 
\nonumber\\
 &=\mbox{\Large$\int$}_{\!\!\mathbb{R}^3} \left(  \rm{Tr}(\vx_c\vx_c\T) \bone_3 - \vx_c\vx_c\T \right) \rho(\vx) \,\textrm{d}\vx 
\label{eq:IbarCexpand}
\end{align}
where $\vx_c = \vx-\vc$. To simplify this expression, the {\em density-weighted covariance} of a rigid body is introduced as
\begin{equation}
\boldsymbol{\Sigma}_C = \mbox{\Large$\int$}_{\!\!\mathbb{R}^3} \, \vx_c\,\vx_c\T \,\rho(\vx)\, \textrm{d}\vx
\label{eq:SigmaC}
\end{equation}
\revone{Note that $\vc$ is the mean position of the rigid body, in a density-weighted sense. Thus, when $m=1$, the definition \eqref{eq:SigmaC} matches that of covariance in probability and statistics.}

\revone{The covariance $\boldsymbol{\Sigma}_C$ and rotational inertia $\vIbar_C$ are related through a rich set of properties. Algebraically, from \eqref{eq:IbarCexpand},
\begin{align}
\vIbar_C &= \rm{Tr}( \boldsymbol{\Sigma}_C) \bone_3 - \boldsymbol{\Sigma}_C \label{eq:IbarCasE}
\end{align}
Taking the trace of both sides provides that $\rm{Tr}(\vIbar_C) = 2\, \rm{Tr}(\boldsymbol{\Sigma}_C)$. From \eqref{eq:IbarCasE}, this property can be used to show 
\begin{align}
\boldsymbol{\Sigma}_C &= \tfrac{1}{2} \rm{Tr}(\vIbar_C) \bone_3 - \vIbar_C
\label{eq:SigmaFromIbar}
\end{align}
which matches the form of the matrix inequality \eqref{eq:lmiatc}.} \pagebreak

\q{commoneigs}{
\revone{Moreover, from \eqref{eq:IbarCasE}, it can be seen that $\boldsymbol{\Sigma}_C$ and $\vIbar_C$ share a set of eigenvectors. That is, the principal axes of $\vIbar_C$ are also the eigenvectors of $\boldsymbol{\Sigma}_C$. It can further be verified that if $\mu_1, \mu_2, \mu_3$ are the eigenvalues of $\boldsymbol{\Sigma}_C$, then 
\begin{align}
J_1 = \mu_2+\mu_3,~J_2 = \mu_1+\mu_3,~J_3=\mu_1+\mu_2
\label{eq:eigrelationships}
\end{align}
are the eigenvalues of $\vIbar_C$.}}
\q{eigenvals}{\revone{Intuitively, $\mu_1/m$ gives the average squared distance to the CoM {\em along} the direction of the first principal axis. In comparison, $J_1/m$ gives the average squared distance to the CoM {\em orthogonal to} the first principal axis.}} \revone{The eigenvalue relationships \eqref{eq:eigrelationships} show that there is a double counting of sorts when it comes to tallying the rotational moments of inertia. It is this double counting that is the source of the triangle inequalities.} 

\q{stdellipfig}
{
\begin{figure}
\center
\includegraphics[width=.87 \columnwidth]{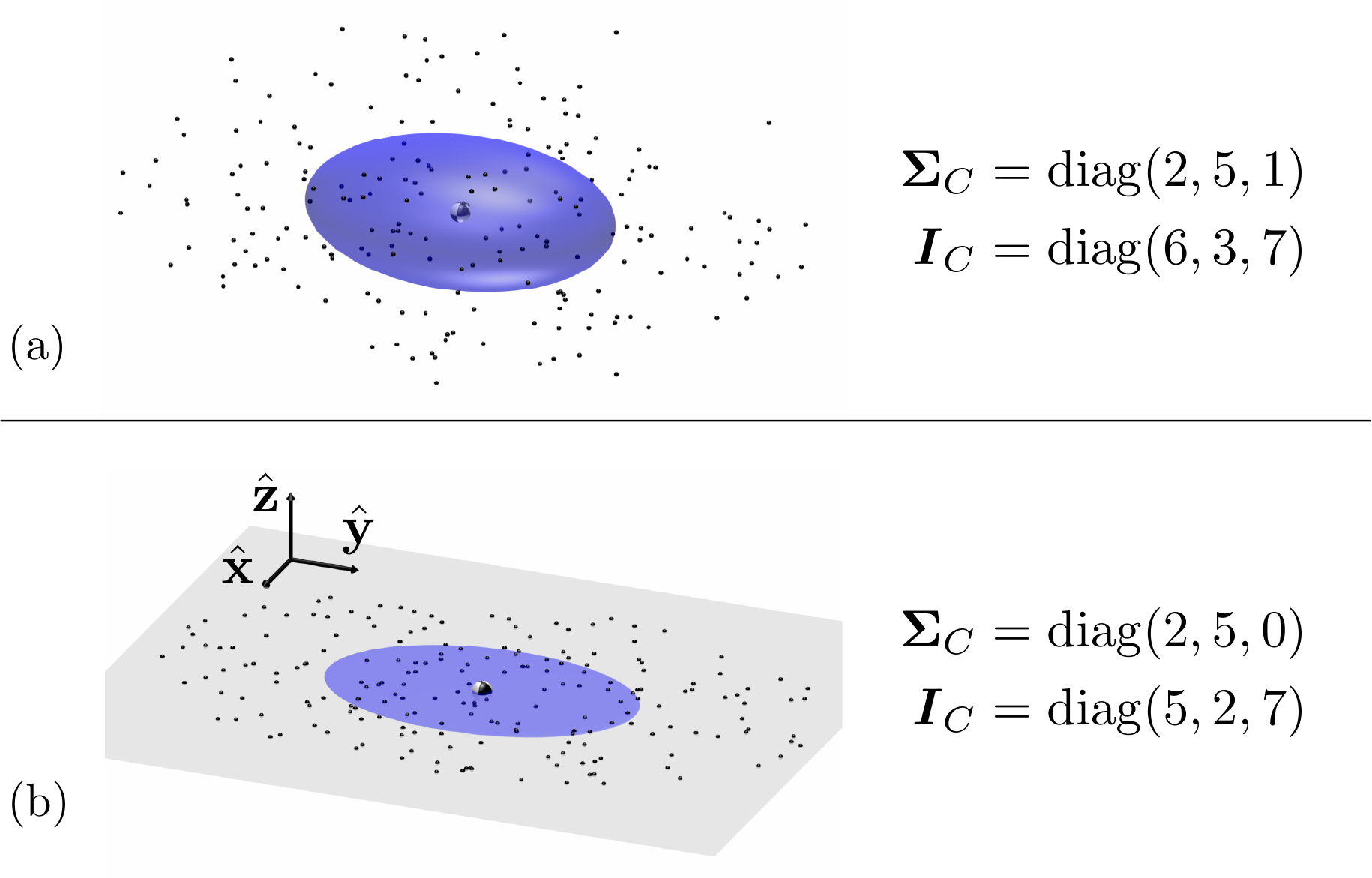}
\caption{
\revone{Graphical representation of $\boldsymbol{\Sigma}_C$. Point-mass distribution examples with (a) $\boldsymbol{\Sigma}_C\succ0$ for a distribution in 3D (all triangle inequalities hold strictly) (b) $\boldsymbol{\Sigma}_C\succeq 0$ for a distribution on an infinitely thin plate. Since distribution (b) is degenerate, $\boldsymbol{\Sigma}_C$ has one zero eigenvalue, and thus one triangle inequality is tight. The blue ellipsoid shown is the covariance ellipsoid $\mathcal{E}_{\vpi}$, and captures the shape of the distribution to second order.}}
\label{fig:GraphicalDescription}
\vspace{-5px}
\end{figure}
}

\revone{To help visualize $\boldsymbol{\Sigma}_C$, when $\boldsymbol{\Sigma}_C \succ 0$, we define}
\[
\mathcal{E}_{\vpi} = \{ \vx \in \mathbb{R}^3 \,:\,  (\vx -\vc)\T   \left( \textrm{\small$\nicefrac{ \boldsymbol{\Sigma}_C}{m}$} \right)^{-1} (\vx -\vc) \le 1 \}
\]
More generally, when $\boldsymbol{\Sigma}_C \succeq 0$ we let
\[
\mathcal{E}_{\vpi} =  \{ \vx \in \mathbb{R}^3 : \exists \vy,~  (\textrm{\small$\nicefrac{\boldsymbol{\Sigma}_C}{m}$})  \vy =   (\vx-\vc) ,~ (\vx -\vc)\T \vy \le 1 \}
\]
The set $\mathcal{E}_{\vpi}$ is named the {\em covariance ellipsoid}. Mass normalization $\nicefrac{\boldsymbol{\Sigma}_C}{m}$ in this definition allows $\mathcal{E}_{\vpi}$ to capture the shape of the distribution to second order while being invariant to uniform scaling in mass.
\q{stddevellipse}{\revone{The ellipsoid has semi-axes whose directions match the principal axes of $\vIbar_C$. However, the lengths of the semi-axes are $\sqrt{\mu_i/m}$, the root-mean-square distance to the CoM along each principal axis.}}


\revone{The covariance ellipsoid $\mathcal{E}_{\vpi}$ is shown in Figure~\ref{fig:GraphicalDescription} for two example point-mass distributions. For simplicity of presentation, both distributions have principal axes that are axially aligned. When $\boldsymbol{\Sigma}_C$ has a zero eigenvalue, as in Fig.~\ref{fig:GraphicalDescription}(b), the mass distribution is degenerate. For this infinitely thin plate, the mean-squared distance to the CoM along the $\hat{\mathbf{z}}$ direction is zero. Thus, $\boldsymbol{\Sigma}_C$ has a zero eigenvalue corresponding to the $\hat{\mathbf{z}}$ principal axis. The corresponding eigenvalue in $\vIbar_C$ is non-zero, and measures the 2D rotational inertia of the plate about $\hat{\mathbf z}$. Due to the degeneracy, however, one of the triangle inequalities holds with equality.} More precisely, the eigenvalue relationships \eqref{eq:eigrelationships} verify the following. \pagebreak

\begin{proposition}[Covariance Interpretation of Triangle Ineqs.] \label{prop:cov} Suppose $\vIbar$, $\boldsymbol{\Sigma} \in \mathbb{S}^3$,  $\boldsymbol{\Sigma}= \tfrac{1}{2}\rm{Tr}(\vIbar ) \bone_3 - \boldsymbol{\vIbar}$. Then $\boldsymbol{\Sigma}\succeq0$ if and only if $\vIbar\succeq0$ and $\vIbar$ satisfies the triangle inequalities. An analogous statement holds with all inequalities strict. 
\end{proposition}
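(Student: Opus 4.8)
The plan is to reduce the claim to elementary arithmetic on eigenvalues, using the fact that $\boldsymbol{\Sigma}$ is an affine function of $\vIbar$. First I would note that $\boldsymbol{\Sigma} = \tfrac12\textrm{Tr}(\vIbar)\bone_3 - \vIbar$ commutes with $\vIbar$ (both $\bone_3$ and $\vIbar$ do), so the two symmetric matrices admit a common orthonormal eigenbasis; in that basis write $\vIbar = \textrm{diag}(J_1,J_2,J_3)$ and $\boldsymbol{\Sigma} = \textrm{diag}(\mu_1,\mu_2,\mu_3)$. Since $\textrm{Tr}(\vIbar) = J_1+J_2+J_3$, the defining relation gives componentwise $\mu_i = \tfrac12(J_1+J_2+J_3)-J_i$, i.e.
\[
\mu_1 = \tfrac12(J_2+J_3-J_1),\quad \mu_2 = \tfrac12(J_1+J_3-J_2),\quad \mu_3 = \tfrac12(J_1+J_2-J_3),
\]
and, inverting (or directly via \eqref{eq:IbarCasE}), $J_1=\mu_2+\mu_3$, $J_2=\mu_1+\mu_3$, $J_3=\mu_1+\mu_2$, which are exactly the eigenvalue relationships \eqref{eq:eigrelationships}.

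With these identities the two implications follow immediately. For $(\Leftarrow)$: if $\vIbar$ satisfies the triangle inequalities \eqref{eq:triangleIneq}, then each $\mu_i = \tfrac12(J_j+J_k-J_i)\ge 0$, so $\boldsymbol{\Sigma}\succeq 0$. For $(\Rightarrow)$: if $\boldsymbol{\Sigma}\succeq 0$, i.e.\ every $\mu_i\ge 0$, then $J_i = \mu_j+\mu_k\ge 0$ shows $\vIbar\succeq 0$, while $J_j+J_k-J_i = 2\mu_i\ge 0$ reproduces the triangle inequalities. The strict version is obtained by repeating the same chain of (in)equalities with $\succ$, $>$, and $\mu_i>0$ in place of $\succeq$, $\ge$, and $\mu_i\ge 0$.

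I do not expect a genuine obstacle here: the proof is a short piece of linear algebra. The only two points needing care are (i) justifying the simultaneous diagonalization — which is where the structural fact that $\boldsymbol{\Sigma}$ is affine in $\vIbar$ does the work — and (ii) getting the signs in the eigenvalue relations right, so that the three triangle inequalities correspond one-for-one to $\mu_1\ge0$, $\mu_2\ge0$, $\mu_3\ge0$. Everything else is direct substitution.
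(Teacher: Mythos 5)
Your proof is correct and follows essentially the same route as the paper, which justifies the proposition precisely via the eigenvalue relationships $J_1=\mu_2+\mu_3$, $J_2=\mu_1+\mu_3$, $J_3=\mu_1+\mu_2$ obtained by diagonalizing $\vIbar$ and $\boldsymbol{\Sigma}$ in a common basis. Your write-up merely makes explicit the simultaneous-diagonalization step and the sign bookkeeping that the paper leaves implicit, so there is nothing to add.
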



\begin{corollary}[Parameterization of $\mathcal{P}^*$ with an LMI on $\boldsymbol{\Sigma}_C$]  $\vpi \in \mathcal{P}^*$ if and only if $m(\vpi)>0$ and $\boldsymbol{\Sigma}_C(\vpi) \succeq 0$.
\label{cor:LMISigma}
\end{corollary}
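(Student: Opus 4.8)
The plan is to deduce Corollary~\ref{cor:LMISigma} directly by combining the Manifold Parameterization of $\mathcal{P}^*$ (Traversaro et~al.) with Proposition~\ref{prop:cov}, using the parallel-axis relation \eqref{eq:parallelaxis} to translate between $\vIbar$ and $\vIbar_C$. The key observation is that all the nontrivial content has already been established; what remains is to chain the equivalences together in the right order.

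First I would unpack what $\vpi \in \mathcal{P}^*$ means via the Manifold Parameterization theorem: it says $\vpi$ is physically consistent iff there exist $m>0$, $\vR \in \SO{3}$, $\vJ = \mathrm{diag}(J_1,J_2,J_3)\succ 0$ satisfying the triangle inequalities, and $\vc \in \mathbb{R}^3$, with $\vIbar_C(\vpi) = \vR\vJ\vR\T$ and $\vh(\vpi)=m\vc$. The existence of such an $\vR$ and positive-definite diagonal $\vJ$ is simply the statement that $\vIbar_C(\vpi) \succ 0$ (spectral theorem), and the triangle inequalities on the $J_i$ are by definition the triangle inequalities on the eigenvalues of $\vIbar_C(\vpi)$. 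Hence $\vpi \in \mathcal{P}^*$ iff $m(\vpi) > 0$, $\vIbar_C(\vpi) \succ 0$, and $\vIbar_C(\vpi)$ satisfies the triangle inequalities.

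Next I would invoke Proposition~\ref{prop:cov} with $\vIbar \leftarrow \vIbar_C(\vpi)$ and $\boldsymbol{\Sigma} \leftarrow \boldsymbol{\Sigma}_C(\vpi)$, where by \eqref{eq:SigmaFromIbar} we have exactly $\boldsymbol{\Sigma}_C(\vpi) = \tfrac{1}{2}\mathrm{Tr}(\vIbar_C(\vpi))\bone_3 - \vIbar_C(\vpi)$, so the hypothesis of the proposition is met. The strict version of Proposition~\ref{prop:cov} then gives: $\boldsymbol{\Sigma}_C(\vpi)\succ 0$ iff $\vIbar_C(\vpi)\succ 0$ and $\vIbar_C(\vpi)$ satisfies the (strict) triangle inequalities. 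Combining with the previous paragraph yields $\vpi \in \mathcal{P}^*$ iff $m(\vpi)>0$ and $\boldsymbol{\Sigma}_C(\vpi)\succ 0$. One subtlety to address: the corollary as stated uses $\boldsymbol{\Sigma}_C(\vpi)\succeq 0$ (non-strict), corresponding to allowing degenerate mass distributions (thin plates, rods, point masses) in $\mathcal{P}^*$; I would note that $\mathcal{P}^*$ is defined via density realizability, which includes such limiting distributions, so the appropriate matching statement is the non-strict one, using the non-strict half of Proposition~\ref{prop:cov} together with the observation that $\vIbar_C \succeq 0$ with triangle inequalities is precisely the closure condition characterizing density-realizable rotational inertias in \eqref{eq:triangleIneq}. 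Finally I would remark that the map $\vpi \mapsto \boldsymbol{\Sigma}_C(\vpi)$ is affine (it is $\tfrac12\mathrm{Tr}(\vIbar_C(\vpi))\bone_3 - \vIbar_C(\vpi)$, and $\vIbar_C(\vpi) = \vIbar(\vpi) - \tfrac{1}{m}\vS(\vh)\vS(\vh)\T$ is — wait, this last piece is \emph{not} affine in $\vpi$ because of the $1/m$ and quadratic $\vh$ terms), so I would instead simply defer the LMI-representability claim to the subsequent Section~\ref{sec:finallmi}, where the nonconvexity coming from $m\vS(\vc)\vS(\vc)\T$ is handled; the corollary itself only asserts the set equality, not convexity.

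The main obstacle is the strict-versus-non-strict bookkeeping at the boundary of the cone: one must be careful that "density realizable" in the definition of $\mathcal{P}^*$ genuinely corresponds to $\boldsymbol{\Sigma}_C \succeq 0$ and not to the strict interior, and that the Manifold Parameterization theorem's use of $\vJ \succ 0$ (together with the separately-stated fact that $\vIbar_C$ density-realizable iff \eqref{eq:triangleIneq} holds, which is phrased with non-strict $\ge$) is reconciled consistently. Everything else is a direct substitution.
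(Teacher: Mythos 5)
Your chain---using Traversaro's manifold parameterization to reduce $\vpi \in \mathcal{P}^*$ to ($m(\vpi)>0$, $\vIbar_C(\vpi)$ positive definite, triangle inequalities), then applying Proposition~\ref{prop:cov} through the identity \eqref{eq:SigmaFromIbar} to turn this into (semi)definiteness of $\boldsymbol{\Sigma}_C(\vpi)$---is exactly the argument the paper relies on; it is the same chain that appears inside the proof of Theorem~\ref{thm:main}, so your proposal is correct and takes essentially the paper's own route. Your bookkeeping of strict versus non-strict inequalities at the degenerate boundary is, if anything, more explicit than the paper, which handles it only by remarking that physical bodies are non-degenerate so one may equivalently require $\boldsymbol{\Sigma}_C \succ 0$.
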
 

\q{result1}{\revone{Stating Corollary \ref{cor:LMISigma} plainly, physical consistency is equivalent to the mass being positive and the density-weighted covariance being positive semidefinite.}} 
\revone{Note that since any physical rigid body is non-degenerate, physical consistency can alternately be considered requiring the density-weighted covariance to be positive definite.} 
\revone{We close this subsection with a few remarks for context.}

\begin{remark} A result similar to Proposition \ref{prop:cov} can be found within \cite{Belta02}, however, the connection to the density-weighted covariance provided here is new.
\end{remark}

\begin{remark}
Corollary \ref{cor:LMISigma} could have applicability to inertia identification and adaptive methods for attitude control of aerial vehicles (e.g.~\cite{Bernstein06,Thakur14}). The triangle inequalities on $\vIbar_C$ are only treated in a small subset of the literature on this topic (e.g.~\cite{ManchesterZ17}). Corollary \ref{cor:LMISigma} could be used to address the triangle inequalities in this domain.
\end{remark}

\begin{remark} 
\q{newCorrolary}{\revone{Corollary \ref{cor:LMISigma} can also be understood within the context of probability and statistics. Suppose a density function $\rho(\cdot)$ and mass $m>0$ with $\int \rho = m$. We can identify the density with a random variable $\vX \in \mathbb{R}^3$ through probability density $p(\cdot) = \rho(\cdot)/m$. This association is well posed, since $\rho(\cdot)$ non-negative implies $p(\cdot)$ non-negative, and $\int p = 1$.
Let $E[\cdot]$ the expectation operation and $\bf \Sigma(\vX) = {E[ (\vX - E[\vX]) (\vX - E[\vX])\T]}$.
Up to scaling by mass, Corollary~\ref{cor:LMISigma} is equivalent to the following. Suppose ${\mathbf E} \in \mathbb{S}^3$, then there exists a random variable $\vX \in \mathbb{R}^3$ such that ${\mathbf E} = \boldsymbol{\Sigma}(\vX)$ iff $\vE\succeq 0$. Noting that $\boldsymbol{\Sigma}(\vX)$ is the covariance of $\vX$, our conditions on density realizability for rigid bodies may be unsurprising in hindsight.}}
\end{remark}


\subsection{An LMI Representation of Physical Consistency}
\label{sec:finallmi}
While \eqref{eq:lmiatc} and its covariance interpretation provide a matrix inequality for the physical consistency of $\vIbar_C$, this condition is not linear in the \revone{inertial} parameters $\vpi$. Towards an LMI over $\vpi$, 
a matrix of second moments is defined as
\begin{equation}
\boldsymbol{\Sigma} = \int_{\mathbb{R}^3} \vx\vx\T \rho(\vx) \, \textrm{d}\vx \revone{= \int_{\mathbb{R}^3} \left[ \begin{smallmatrix} \,\,x^2 & xy & xz \\ xy & \,\,\,y^2 & yz \\ xz & yz & \,\,z^2 \end{smallmatrix} \right] \rho(\vx) \, \textrm{d}\vx}
\label{eq:sigma}
\end{equation}
Through expansion of \eqref{eq:SigmaC} and using \eqref{eq:density1} and \eqref{eq:density2}, an analog to the parallel axis theorem can be verified as:
\begin{equation}
\boldsymbol{\Sigma} = \boldsymbol{\Sigma}_C +  m \vc \vc\T
\label{eq:parallelAxisSigma}
\end{equation}
As a key feature, using the relationship $\boldsymbol{\Sigma}= \tfrac{1}{2} \rm{Tr}(\vIbar) \bone_3 - \vIbar$ from \eqref{eq:SigmaFromIbar}, $\boldsymbol{\Sigma}$
is verified linear in the \revone{inertial} parameters $\vpi$. 


\begin{definition}[Pseudo-Inertia Matrix]
The pseudo-inertia matrix $\pInertia \revone{\in \mathbb{R}^{4\times 4}}$ of a rigid body is defined by
\[
\pInertia = \begin{bmatrix}  \boldsymbol{\Sigma}\phantom{{}\T} & \vh \\[.5ex] 
\vh\T &  m \end{bmatrix} 
\]
\end{definition}
\revone{Suppose some density $\rho(\cdot)$ with pseudo inertia $\pInertia$. The entries of $\pInertia$ are then given by definitions in \eqref{eq:density1}, \eqref{eq:density2}, and \eqref{eq:sigma}. These entries of $\pInertia$ include all moments of $\rho(\cdot)$, in the sense of Definition~\ref{def:moment}, up to second order. Additionally, given \revone{inertial} parameters $\vpi \in \mathbb{R}^{10}$, $\pInertia(\vpi)$ is verified linear in $\vpi$.}

\begin{theorem}[LMI Representation of $\mathcal{P}^*$]
\label{thm:main}
The set of physically-consistent parameters $\mathcal{P}^*$ for a single rigid body is strictly LMI representable. Its LMI representation is
\[
\mathcal{P}^* = \left\{ \vpi \in \mathbb{R}^{10} \,:\, \pInertia(\vpi) \succ 0
\right\}
\]
\end{theorem}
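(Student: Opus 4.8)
The plan is to obtain Theorem~\ref{thm:main} as a short consequence of Corollary~\ref{cor:LMISigma}, splitting the positive-definiteness of the pseudo-inertia matrix into the two conditions $m>0$ and $\boldsymbol{\Sigma}_C\succ 0$ via a single Schur-complement step. The ``strictly LMI representable'' part is essentially immediate: as noted in the paragraph preceding the theorem, every entry of $\pInertia(\vpi)$ is a homogeneous linear function of the ten components of $\vpi$ (the block $\boldsymbol{\Sigma}$ depending linearly on $\vpi$ through $\boldsymbol{\Sigma} = \tfrac{1}{2} \mathrm{Tr}(\vIbar)\bone_3 - \vIbar$, while $\vh$ and $m$ are entries of $\vpi$ outright), so $\pInertia(\vpi) = \sum_{k=1}^{10} \pi_k\, \pInertia_k$ for fixed $\pInertia_k \in \mathbb{S}^4$. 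Hence $\{\vpi : \pInertia(\vpi) \succ \bzero\}$ is strictly LMI representable by definition, with $\vA_0 = \bzero$, and all that remains is the set equality $\mathcal{P}^* = \{\vpi : \pInertia(\vpi) \succ \bzero\}$.

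Next I would rewrite the condition $\pInertia(\vpi) \succ 0$ in terms of $m$ and $\boldsymbol{\Sigma}_C$. Viewing $\pInertia = \left[\begin{smallmatrix}\boldsymbol{\Sigma} & \vh \\ \vh\T & m\end{smallmatrix}\right]$ and taking the Schur complement with respect to the scalar $(4,4)$-block, $\pInertia \succ 0$ holds iff $m>0$ and $\boldsymbol{\Sigma} - \tfrac{1}{m}\vh\vh\T \succ 0$. Substituting $\vh = m\vc$ yields $\boldsymbol{\Sigma} - m\vc\vc\T \succ 0$, and the generalized parallel-axis identity \eqref{eq:parallelAxisSigma} identifies the left-hand side as exactly $\boldsymbol{\Sigma}_C$. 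Therefore $\pInertia(\vpi)\succ 0$ if and only if $m(\vpi)>0$ and $\boldsymbol{\Sigma}_C(\vpi)\succ 0$.

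Finally I would invoke Corollary~\ref{cor:LMISigma} in its positive-definite form, which is the one relevant here: a density $\rho(\cdot)$ realizing $\vI$ with $m>0$ is necessarily non-degenerate, hence forces $\boldsymbol{\Sigma}_C \succ 0$, which is precisely the non-degeneracy remark made just after the corollary. Thus $\vpi \in \mathcal{P}^*$ iff $m(\vpi)>0$ and $\boldsymbol{\Sigma}_C(\vpi)\succ 0$; chaining this with the equivalence from the previous paragraph gives $\mathcal{P}^* = \{\vpi : \pInertia(\vpi)\succ\bzero\}$, completing the proof.

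The genuinely nontrivial content — recasting the triangle inequalities as $\tfrac{1}{2}\mathrm{Tr}(\vIbar_C)\bone_3 - \vIbar_C \succeq 0$, recognizing this matrix as the density-weighted covariance, and Proposition~\ref{prop:cov} together with Corollary~\ref{cor:LMISigma} — is already in place, so the only delicate point in the present argument is the strict-versus-non-strict bookkeeping: reconciling the uniformly strict inequalities produced by the Schur complement with Corollary~\ref{cor:LMISigma} as literally stated (which uses $\succeq$), by way of the non-degeneracy observation. Everything else (the substitution $\vh = m\vc$, the parallel-axis step, and checking that the convenient Schur complement is the one about the scalar block) is routine.
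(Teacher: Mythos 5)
Your proposal is correct and follows essentially the same route as the paper: a Schur complement about the scalar mass block, the substitution $\vh = m\vc$ together with the parallel-axis identity \eqref{eq:parallelAxisSigma} to reduce $\pInertia(\vpi)\succ 0$ to $m>0$ and $\boldsymbol{\Sigma}_C\succ 0$, and then the covariance characterization of physical consistency. The only (cosmetic) difference is that the paper closes by citing Proposition~\ref{prop:cov} and the main result of \cite{Traversaro16} directly, whereas you invoke Corollary~\ref{cor:LMISigma} and handle the strict-versus-non-strict matching via the non-degeneracy observation — if anything, a slightly more careful treatment of the boundary case than the paper's own wording.
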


\begin{proof}
By the Schur complement lemma \cite[Section A.5.5]{BoydVanberghe04},  $\pInertia(\vpi) \succ 0$  if and only if $m(\vpi) > 0$ and ${\boldsymbol{\Sigma} - \tfrac{1}{m} \vh \vh\T \succ 0}$. By application of the parallel axis theorem for second moment matrices \eqref{eq:parallelAxisSigma}, and using $\vh = m \vc$, this is equivalent to ${\boldsymbol{\Sigma}_C \succ 0}$. From Proposition \ref{prop:cov}, this is equivalent to $\vIbar_C \succ 0$ and $\vIbar_C$ satisfies the triangle inequalities. Finally, from the main result of \cite{Traversaro16}, this is equivalent to $\vpi \in \mathcal{P}^*$. 
\end{proof}

\begin{remark}
Again, taking a statistical perspective on the mass distribution, the results of Thm.~\ref{thm:main} can be seen to follow from a condition on the first and second moments \revone{in probability and statistics} \cite[Thm.~16.1.2]{Bertsimas00}. Suppose $\boldsymbol{\mu} \in \mathbb{R}^3$ and ${\mathbf E} \in \mathbb{S}^3$, then there exists a random variable $\vX \in \mathbb{R}^3$ such that $\boldsymbol{\mu} = E[\vX]$ and ${\mathbf E} = E[\vX \vX\T]$
if and only if
\[
\begin{bmatrix} {\mathbf E}\phantom{{}\T} & \boldsymbol{\mu} \\ \boldsymbol{\mu}\T & 1\end{bmatrix} \succeq 0
\] 
In comparison, the constraint $\pInertia(\vpi) \succ 0$ in Thm.~\ref{thm:main} is scaled by mass and enforces non-degeneracy of the distribution.
\end{remark}

The moment matrix $\pInertia(\vpi)$ was found commonly in early robot dynamics literature (e.g. \cite{bejczy1974robot,Uicker67}).  It has been employed in $4\times 4$ matrix forms of the dynamics for a rigid body \cite{Legnani96,Atchonouglo08b}. Using these $4\times4$ equations,  a parameter identification approach for a single rigid body was proposed in \cite{Atchonouglo08b}. The pseudo inertia was used to provide a left-invariant Riemannian metric over $\SE{3}$ \cite{Belta02} and appears in robotics books (e.g.~\cite{Yoshikawa03}). Despite its importance here, the pseudo inertia is notably lacking from current mainstream literature on robot dynamics.

It is interesting to note how the pseudo inertia $\pInertia(\vpi)$ compares to the standard spatial inertia $\vI(\vpi)$ in terms of the kinetic energy metric each provides. Suppose 
\begin{equation}
\mathbf{v} = \begin{bmatrix} \boldsymbol{\omega} \\ \boldsymbol{v} \end{bmatrix} \in \mathbb{R}^6 \textrm{~, and~}\mathbf{V} = \begin{bmatrix} \vS(\boldsymbol{\omega}) & \vel \\ \bzero &  0 \end{bmatrix} \in\se{3} \nonumber
\end{equation} 
It can be verified that the kinetic energy satisfies \cite{Yoshikawa03}
\begin{equation}
\tfrac{1}{2} \vv\T \vI(\vpi) \vv = \tfrac{1}{2} \textrm{Tr}\!\left( \mathbf{V}\, \pInertia(\vpi) \mathbf{V}\T  \right) \nonumber
\end{equation} 
Thus, while physical semi-consistency ensures that the associated kinetic energy metric is positive definite, additional constraints from the triangle inequalities enforce added structure on the metric. As has been shown for the first time here, the triangle inequalities are precisely what represent the difference between $\vI(\vpi)\succ0$ and $\pInertia(\vpi) \succ 0$. 
 The pseudo inertia is also of a lower dimension ($4\times4$) than the spatial inertia ($6\times6$). This provides computational benefits to enforcing LMIs on  $\pInertia(\vpi)$ in comparison to $\vI(\vpi)$.

\section{Contribution: LMI Constraints for \revone{Inertial} Parameter Realizability on Ellipsoids}
\label{sec:Extensions}
\newcommand{\vQ}{ {\mathbf{Q}}}

\revone{It has been shown that physical consistency of the \revone{inertial} parameters can be enforced with an LMI over all moments up to second order. The resultant LMIs have been explained through connections to the existence of probability measures. Even beyond mechanics and probability, however, the realizability of moment sequences represents a more general problem in mathematics. Namely, it represents the classical problem of moments (see e.g.~\cite{Fialkow10,Lasserre08}). With this insight, results are translated here to provide new conditions on density realizability with bounding-volume constraints. }

Recent work has shown the benefits of including prior knowledge on the shape of each rigid body within parameter identification. Work by  Jovic et al.~\cite{Jovan16} for instance, enforced the CoM to reside within a bounding box estimated from CAD. New constraints are provided here that address second moments.

To begin, suppose a rigid body is known to reside within an ellipsoid $\mathcal{S} \subset \mathbb{R}^3$ described by:
\begin{equation}
\label{eq:bounding}
\mathcal{S}=\{ \vx \in \mathbb{R}^3 ~|~ (\vx - \vx_s)\T \vQ_s^{-1} (\vx - \vx_s) \le 1\}
\end{equation}
Before considering constraints on the second moments, it is noted that  the CoM constraint, $\vc(\vpi) \in \mathcal{S}$, can be formulated as an LMI over $\vpi$. When $m(\vpi)>0$, $\vc(\vpi) \in \mathcal{S}$ iff 
\begingroup\makeatletter\def\f@size{9}\check@mathfonts
\begin{equation}
\boldsymbol{C}(\vpi) =  \begin{bmatrix} m(\vpi) & \vh(\vpi)\T -m(\vpi) \vx_s\T  \\ \vh(\vpi) -m(\vpi) \vx_s & m(\vpi) \vQ_s \end{bmatrix} \succeq 0
\label{eq:comloc}
\end{equation}
\begingroup\makeatletter\def\f@size{10}\check@mathfonts
Again,  equivalence is due to the Schur complement lemma.

Yet, as the CoM approaches the edge of $\mathcal{S}$, large second moments would imply the existence of mass outside the ellipsoid. In fact, as the CoM of a rigid body approaches the edge of a bounding ellipsoid, the rigid body necessarily degenerates to a point mass.
Thus, constraints that $\vc(\vpi) \in \mathcal{S}$ alone are not sufficient for $\vpi$ to be $\mathcal{S}$-density realizable. Drawing on \revone{the classical problem of moments}, Theorem~4.7(b) from \cite{Fialkow10} can be translated as follows. 

\begin{theorem}[Density Realizability on an Ellipsoid]
Suppose a bounding ellipsoid $\mathcal{S} \subset \mathbb{R}^3$. Let $\vQ\in \mathbb{R}^{4\times4}$ such that
\begin{equation}
\mathcal{S} = \left \{ \vx \in \mathbb{R}^3 ~:~ \left[\begin{smallmatrix} \vx \\[.4ex]  1 \end{smallmatrix}\right]\T\! \vQ \left[\begin{smallmatrix} \vx \\[.4ex]  1 \end{smallmatrix} \right] \ge 0 \right\} \nonumber
\end{equation}
Then, $\vpi$ is $\mathcal{S}$-density realizable if and only if 
\begin{equation}
\label{eq:newCondition}
\pInertia(\vpi) \succ 0 \rm{~and~} \rm{Tr}\left( \pInertia(\vpi) \, \vQ \right) \ge 0\,.
\end{equation}
Further, any $\vpi$ is $\mathcal{S}$-density realizable iff it can be represented by four point masses $m_k$ at positions $\vx_k \in \mathcal{S}$. 
\label{thm:RepresentabilityEllipse}
\end{theorem}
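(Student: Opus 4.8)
The plan is to recognize $\pInertia(\vpi)$ as the truncated moment matrix of the body's mass measure, $\mathrm{Tr}(\pInertia(\vpi)\,\vQ)$ as the value of the associated localizing functional on the quadric that cuts out $\mathcal{S}$, and then to import the solution of the (degree-two) truncated moment problem on a quadric, \cite[Thm.~4.7(b)]{Fialkow10}. First I would set up the dictionary. For a density $\rho$ with parameters $\vpi$, put $\mathrm{d}\mu(\vx)=\rho(\vx)\,\mathrm{d}\vx$ and $\vv(\vx)=[\vx\T,\,1]\T\in\mathbb{R}^4$. The defining relations \eqref{eq:density1}, \eqref{eq:density2}, \eqref{eq:sigma} give $\pInertia(\vpi)=\int_{\mathbb{R}^3}\vv(\vx)\vv(\vx)\T\,\mathrm{d}\mu(\vx)$, i.e.\ $\pInertia(\vpi)$ collects exactly the moments of $\mu$ of order at most two. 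Writing $q(\vx):=\vv(\vx)\T\vQ\,\vv(\vx)$, so that $\mathcal{S}=\{q\ge0\}$, the cyclic property of the trace gives $\mathrm{Tr}(\pInertia(\vpi)\,\vQ)=\int_{\mathbb{R}^3}q(\vx)\,\mathrm{d}\mu(\vx)$; since $\deg q=2$ matches the order of the moments carried by $\pInertia$, this single scalar is the whole localizing matrix of $q$ against $\mu$.

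The necessity of \eqref{eq:newCondition} is then elementary. If $\vpi$ is $\mathcal{S}$-density realizable via $\rho$, then for every nonzero $\vz\in\mathbb{R}^4$ we have $\vz\T\pInertia(\vpi)\vz=\int(\vz\T\vv(\vx))^2\rho(\vx)\,\mathrm{d}\vx>0$, because $m(\vpi)>0$ forces $\rho$ to be positive on a set of positive Lebesgue measure, which cannot be contained in the measure-zero affine hyperplane $\{\vz\T\vv=0\}$; hence $\pInertia(\vpi)\succ0$. Likewise $\mathrm{Tr}(\pInertia(\vpi)\,\vQ)=\int q\,\rho\ge0$ because $q\ge0$ on $\mathrm{supp}\,\rho\subseteq\mathcal{S}$ and $\rho\ge0$. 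The same two computations apply to any representing measure, so they also give the necessity in the four-point-mass characterization.

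For sufficiency and the atomic representation I would invoke \cite[Thm.~4.7(b)]{Fialkow10}: given $\pInertia(\vpi)\succ0$ (hence $\succeq0$) and localizing value $\mathrm{Tr}(\pInertia(\vpi)\,\vQ)\ge0$ — precisely the hypotheses of that theorem for the degree-two moment data encoded by $\pInertia(\vpi)$ and the quadric $q$ — there is a representing measure for $\vpi$ supported on $\mathcal{S}$, and in fact a finitely atomic one with at most $\rank \pInertia(\vpi)=4$ atoms in $\mathcal{S}$; this is the four-point-mass statement. The reverse of that statement is immediate from the previous paragraph, since four point masses $m_k>0$ at $\vx_k\in\mathcal{S}$ give $\mathrm{Tr}(\pInertia(\vpi)\,\vQ)=\sum_k m_k q(\vx_k)\ge0$ together with $\pInertia(\vpi)\succ0$ whenever the $\vx_k$ are not coplanar. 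To close the ``if'' direction of the first equivalence — i.e.\ to produce an honest density on $\mathcal{S}$ rather than only a measure — I would mollify: because $\mathcal{S}$ is full-dimensional and $\pInertia(\vpi)\succ0$ forces the atoms to affinely span $\mathbb{R}^3$, the atomic representing measure can be regularized to an absolutely continuous one supported in $\mathcal{S}$ with the same ten moments (equivalently, interior points of the moment cone are realized by densities).

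I expect the main obstacle to be the invocation of \cite[Thm.~4.7(b)]{Fialkow10} itself: one must check that for a degree-two truncation against a single quadric the degrees match so that no higher-order localizing matrix is needed, and — the real content — that positivity of the moment matrix together with the scalar localizing condition is not merely necessary but \emph{sufficient} for a representing measure on $\mathcal{S}$, a property special to the quadratic moment problem. A secondary, more routine point is the passage between the finitely atomic measure delivered by that theorem and a genuine density; the mollification above handles it when $\mathrm{Tr}(\pInertia(\vpi)\,\vQ)>0$, the boundary case $\mathrm{Tr}(\pInertia(\vpi)\,\vQ)=0$ being the one place where ``density realizable'' must be read in the limiting, finitely-supported sense.
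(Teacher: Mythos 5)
Your proposal is correct and follows essentially the same route as the paper: the paper's entire proof is the citation of \cite[Thm.~4.7(b)]{Fialkow10}, which is exactly the result you invoke after setting up the dictionary between $\pInertia(\vpi)$, the truncated moment matrix, and ${\rm Tr}(\pInertia(\vpi)\,\vQ)$ as the localizing quantity for the quadric defining $\mathcal{S}$. Your additional care about the necessity direction, the atomic-versus-absolutely-continuous representing measure, and the boundary case ${\rm Tr}(\pInertia(\vpi)\,\vQ)=0$ fills in details the paper leaves implicit in that citation.
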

\begin{proof} See \cite[Theorem~4.7(b)]{Fialkow10}. \end{proof}

Note that by taking into account second moments, the condition $\rm{Tr}\!\left( \pInertia(\vpi)\,\vQ  \right) \ge 0$ in \eqref{eq:newCondition} is a 1D linear inequality constraint. The looser condition \eqref{eq:comloc} is a $4\times4$ LMI. Thus, the new conditions are both tighter, and computationally more efficient to enforce. Figure \ref{fig:EllipseExamples} shows a number of cases, illustrating both the role of the CoM location and the shape of the \revone{covariance ellipsoid} $\mathcal{E}_{\vpi}$ on $\mathcal{S}$-density realizability.  The following corollary is provided to accommodate more complex bounding shapes.

\begin{figure}
\center
\includegraphics[width =  \columnwidth]{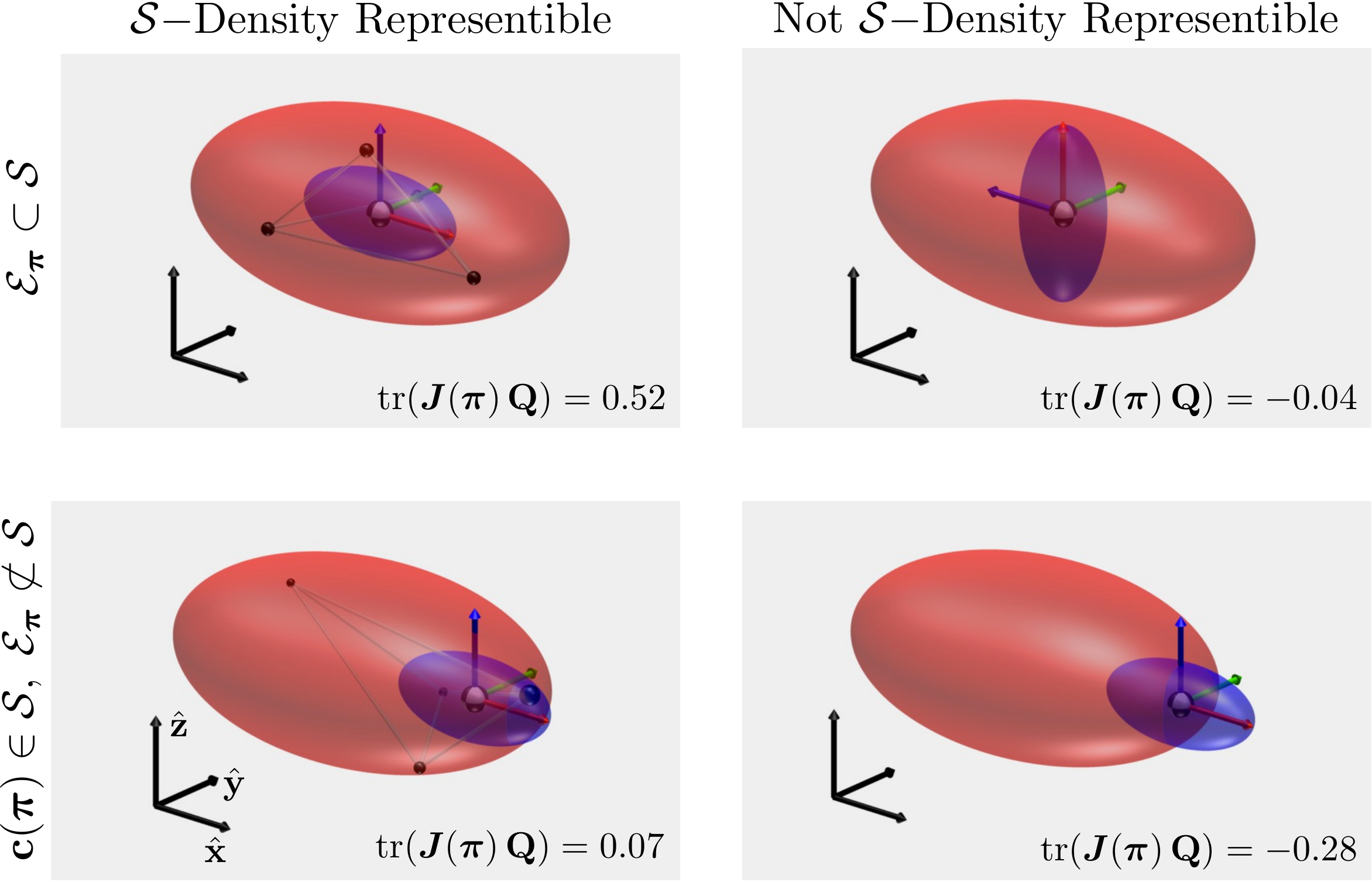}
\caption{Cases illustrating the role of the center and shape of the \revone{covariance ellipsoid} $\mathcal{E}_{\vpi}$ on $\mathcal{S}$-density realizability of $\vpi$. In each case, the ellipsoid $\mathcal{S}$ has semi-axes of length $\sqrt{5}, \sqrt{2},1$ in the $\hat{\vx}, \hat{\vy}, \hat{\vz}$ directions. $\mathcal{E}_{\vpi}$ has semi-axes ${\color{red}{\sqrt{0.9}}}, {\color{green} \sqrt{0.2}}, {\color{blue} \sqrt{0.2}}$ with principle axes colored accordingly in the figure. When a mass distribution on $\mathcal{S}$ exists, a distribution by four point masses is shown, as guaranteed to exist through Thm.~4.}
\label{fig:EllipseExamples}
\vspace{-12px}
\end{figure}

\begin{corollary}[Density Realizability on the Union of Ellipsoids] Suppose a rigid body is known to reside within the union of $n_\ell$ ellipsoids $\mathcal{S}=\cup_{j=1}^{n_\ell} \mathcal{S}_j$. Then, its \revone{inertial} parameters $\vpi$ are $\mathcal{S}$-density realizable if and only if there exist parameters $\{\vpi_{j}\}_{j=1}^{n_\ell}$ such that $\vpi = \sum_j \vpi_j$ and each $\vpi_j$ is $\mathcal{S}_j$-density realizable as verified by Thm.~\ref{thm:RepresentabilityEllipse}.
\end{corollary}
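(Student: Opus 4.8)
The plan is to prove the Corollary by establishing the two directions of the ``if and only if'' separately, using Theorem~\ref{thm:RepresentabilityEllipse} as the key tool on each individual ellipsoid.

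\textbf{Sufficiency.} Suppose there exist parameters $\{\vpi_j\}_{j=1}^{n_\ell}$ with $\vpi = \sum_j \vpi_j$, each $\vpi_j$ being $\mathcal{S}_j$-density realizable. By the final claim of Theorem~\ref{thm:RepresentabilityEllipse}, each $\vpi_j$ can be represented by four point masses located within $\mathcal{S}_j \subseteq \mathcal{S}$. Concatenating these point-mass distributions (i.e.\ summing the corresponding densities, or more carefully, forming the sum of the associated Borel measures supported on $\mathcal{S}$) produces a mass distribution supported in $\mathcal{S}$ whose moments up to second order are additive. Since the map $\rho \mapsto \vpi$ given by \eqref{eq:density1}--\eqref{eq:density3} is linear in the density/measure, the combined distribution realizes $\sum_j \vpi_j = \vpi$. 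Hence $\vpi$ is $\mathcal{S}$-density realizable. (One minor point to state cleanly: density realizability as defined uses a density function $\rho(\cdot)$, whereas the four-point-mass representation is a sum of Dirac measures; I would note that the definition should be read in the measure-theoretic sense consistent with Definition~\ref{def:moment}, or alternatively approximate each point mass by a narrowly supported density inside $\mathcal{S}_j$ and pass to the limit, since the set $\mathcal{P}^*_{\mathcal{S}}$ is closed as noted via its LMI characterization.)

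\textbf{Necessity.} Conversely, suppose $\vpi$ is $\mathcal{S}$-density realizable, where $\mathcal{S} = \cup_{j=1}^{n_\ell} \mathcal{S}_j$. Let $\rho(\cdot)$ be a witnessing density with $\rho(\vx) = 0$ for $\vx \notin \mathcal{S}$. Partition $\mathcal{S}$ into disjoint measurable pieces $\mathcal{S}_j' \subseteq \mathcal{S}_j$ (e.g.\ $\mathcal{S}_1' = \mathcal{S}_1$, $\mathcal{S}_2' = \mathcal{S}_2 \setminus \mathcal{S}_1$, and in general $\mathcal{S}_j' = \mathcal{S}_j \setminus \bigcup_{k<j}\mathcal{S}_k$), and define $\rho_j(\vx) = \rho(\vx)$ for $\vx \in \mathcal{S}_j'$ and $\rho_j(\vx) = 0$ otherwise. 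Each $\rho_j$ is non-negative and supported in $\mathcal{S}_j$, so the inertial parameters $\vpi_j$ it induces are $\mathcal{S}_j$-density realizable. Because the partition is a disjoint cover of $\mathcal{S} \supseteq \mathrm{supp}(\rho)$, the integrals defining each component of $\vpi$ split additively: $\sum_j \vpi_j = \vpi$. This gives the required decomposition.

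\textbf{Main obstacle.} The mathematical content is light; the only genuine subtlety is the mismatch between the ``density function'' phrasing of density realizability and the ``four point masses'' conclusion of Theorem~\ref{thm:RepresentabilityEllipse}, which is a sum of Dirac measures rather than an $L^1$ density. I expect the cleanest route is to remark once, up front, that throughout this section density realizability is understood in the measure sense of Definition~\ref{def:moment} (positive Borel measures supported on the given set), which makes both directions immediate; the additive splitting in necessity then relies only on $\sigma$-additivity of the restriction of a measure to a measurable partition, and sufficiency on the fact that a finite sum of measures supported in $\mathcal{S}$ is a measure supported in $\mathcal{S}$. If one insists on honest $L^1$ densities, the point-mass distributions must be mollified and one invokes closedness of $\mathcal{P}^*_{\mathcal{S}}$ (Remark after the Density Realizable definition) to take limits — worth a sentence, but not a real difficulty.
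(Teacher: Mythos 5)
The paper states this corollary without an explicit proof, so there is no official argument to compare against; your write-up supplies the intended one, and it is correct: both directions come down to linearity of the moment functionals \eqref{eq:density1}--\eqref{eq:density3} in the mass distribution, summing distributions supported in the $\mathcal{S}_j$ for sufficiency and restricting a witnessing distribution to a measurable disjointification $\mathcal{S}_j' \subseteq \mathcal{S}_j$ for necessity. Two minor points are worth tightening. First, the detour through the four-point-mass representation in the sufficiency direction is unnecessary, and it is exactly what creates the Dirac-versus-density friction you then have to discuss: $\mathcal{S}_j$-density realizability of $\vpi_j$ already hands you, by definition, a nonnegative density $\rho_j$ vanishing outside $\mathcal{S}_j$, so $\rho = \sum_j \rho_j$ finishes the argument with no mollification or limiting step; note also that your fallback appeal to closedness of $\mathcal{P}^*_{\mathcal{S}}$ is not literally correct, since the characterization in Theorem~\ref{thm:RepresentabilityEllipse} uses the strict inequality $\pInertia(\vpi) \succ 0$, which describes a set that is not closed. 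Second, in the necessity direction a cell $\mathcal{S}_j'$ may carry zero mass, in which case $\vpi_j = \vzero$ is $\mathcal{S}_j$-density realizable in the sense of the definition but does not meet the strict certificate $\pInertia(\vpi_j) \succ 0$ of Theorem~\ref{thm:RepresentabilityEllipse}; consistent with the paper's standing positive-mass convention, either restrict the decomposition to those ellipsoids that actually carry mass (padding with zero parameters) or read the certificate with $\succeq$ in this degenerate case. For cells of positive mass the strict inequality is automatic for a genuine $L^1$ density, since no plane can carry positive mass, so this is the only corner case. Neither point affects the substance of your argument.
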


\begin{remark}
\label{remark:adaptation}
LMIs for the nested sets $\mathcal{P}^*_\mathcal{S} \subset \mathcal{P}^*_{\mathbb{R}^3} \subset \mathcal{P}$ have applicability for other problems. Tightened convex constraints can only increase the rate of convergence for online parameter estimation \cite{Ioannou12}, or more generally provide faster decrease of Lyapunov-like functions in adaptive control~\cite{Slotine86}. 
\q{volume}{\revone{For a system of bodies, the relative volume of the set $\mathcal{P}^*_{\mathcal{S}_{1}} \times \cdots \times  \mathcal{P}^*_{\mathcal{S}_{n_b}}$ versus $\mathcal{P}^*_{\mathbb{R}^3} \times \cdots \times \mathcal{P}^*_{\mathbb{R}^3}$ decreases exponentially with $n_b$ due to its product structure. Thus, it is expected that these benefits will increase for higher-DoF robots.}}
\end{remark}

\newcommand{\mysp} {}

\vspace{-1ex}
\section{Experimental Validation}

\label{sec:results}

\begin{figure*}
\center
\includegraphics[width = .9\textwidth]{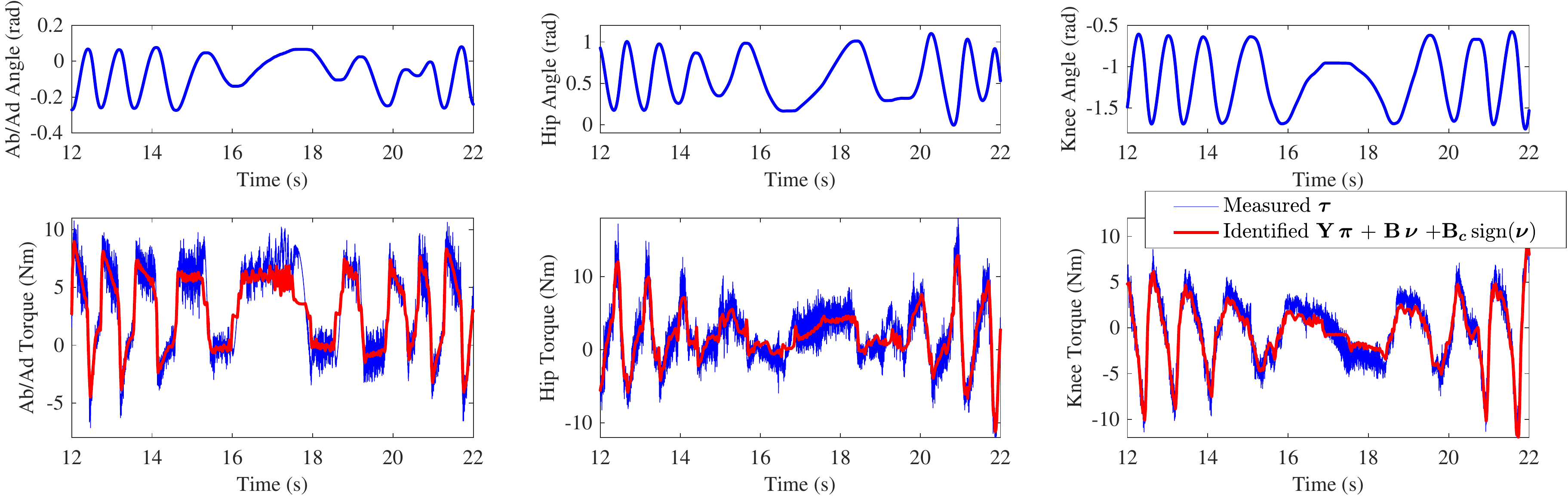}
\vspace{-6px}
\caption{Regression errors after identification of a leg for the MIT Cheetah 3. The data shown is distinct from that used in training. Motor torques are predicted with an overall RMS error of $1.46$ Nm. For comparison, the Coulomb friction was identified as $\mathbf{B}_c = {\rm diag}(3.12, 1.25, 0.95)$ Nm.}
\label{fig:tracking}
\vspace{-5px}
\end{figure*}

The proposed constraints were applied to identify a leg of the MIT Cheetah 3 robot. The robot, shown in Fig.~\ref{fig:leg}, has four 3-DoF legs where each DoF is driven by a proprioceptive actuator \cite{Wensing17b}. Each actuator includes a high-inertia rotor coupled to the joint by a $10.6:1$ gearbox. To  address actuator effects, the leg was treated as a system of $n_b=6$ bodies (3 links and 3 rotors). Joint rates at the gearbox output were used for~$\vnu \in\mathbb{R}^3$.  To account for transmission losses, \eqref{eq:regressor} was modified via diagonal matrices of viscous and Coulomb friction coefficients $\mathbf{B} \in \mathbb{R}^{3\times3}$ and $\mathbf{B}_c \in \mathbb{R}^{3\times3}$
\begin{equation}
\boldsymbol{\tau}  - {\mathbf B}\, \vnu -\mathbf{B}_c\,{\rm sign}(\vnu) = \vY(\vq, \vnu, \vnud)\, \vpi 
\label{eq:transmissionRegressor}
\end{equation}
Using Theorem \ref{thm:RepresentabilityEllipse}, physically realistic \revone{inertial} parameters can be identified alongside transmission effects.
\begingroup\makeatletter\def\f@size{9}\check@mathfonts
\begin{align}
\min_{\vpi,\mathbf{B}_c,\mathbf{B}}~ & \frac{1}{n_s}\sum_{m} \| \vY^{(m)}\, \vpi + {\mathbf B}\, \vnu^{(m)}\! + \mathbf{B}_c\,{\rm sign}(\vnu^{(m)})  -  \boldsymbol{\tau}^{(m)} \|^2~~~~~~~~~~~~~~~~~~~~~~~ \nonumber\\&~~+ w_\pi\|\vpi-\hat{\vpi}\|^2
\label{eq:optProb}\\
			\textrm{s.t.}~ 
						  & \boldsymbol{C}_i(\vpi_i) \succeq 0 \nonumber\\ 
						  & \phantom{{}_i\,}\pInertia(\vpi_i) \succ 0 \nonumber \\ 
						  & \mbox{\rlap{${\rm Tr}( \pInertia(\vpi_i) \, \vQ_i)\ge 0$}\hphantom{$\boldsymbol{C}_i(\vpi_i) \succeq 0$}}  \hspace{40px} \forall i \in \{1,\ldots,n_b\} \nonumber 
\end{align}  \begingroup\makeatletter\def\f@size{10}\check@mathfonts
A small regularizing term $w_\pi\|\vpi-\hat{\vpi}\|^2$ was added where $\hat{\vpi}\in \mathbb{R}^{10 n_b}$ is a set of estimated \revone{inertial} parameters from CAD or other sources. Such regularization is common practice~\cite{Sousa14}. Bounding-ellipsoid parameters for the CoM and body, in $\boldsymbol{C}_i$ and $\vQ_i$ respectively, were set using geometry from CAD.

Data was gathered from a leg swinging experiment shown in the supplementary video. The leg was placed in a Cartesian impedance control mode, and the foot endpoint was commanded to move on a virtual ellipsoidal shell. The target point on the shell was set through spherical angles $(\phi,\theta)$, using rates $\dot{\phi} = A_\phi \sin(\omega_\phi t )$ and $\dot{\theta} = A_\theta \sin(\omega_\theta t)$ with $A_\phi = 12$ rad/s,  $A_\theta = 3.4$ rad/s, $\omega_\phi=1.63$ rad/s, and $\omega_\theta = 0.265$ rad/s. Data was sampled at 1 kHz. Joint actuator torques $\boldsymbol{\tau}_j$ were estimated from the torques commanded to motor drivers \cite{Wensing17b}. \q{RNEA}{\revone{At each data point, a modified version of the Recursive-Newton-Euler Algorithm \cite{Featherstone08} was used to compute each column of $\vY$.}}  Following this computation, the problem \eqref{eq:optProb} was solved with 10,000 samples using MOSEK  \cite{AndersonE03} in {\sc Matlab}. The problem took 1.67s to solve to global optimality on a 2011 Intel Core i5 MacBook Pro. Regularization of $w_\pi = 10^{-6}$ was used. 
\q{constraintsSatisfied}{\revone{It was verified that all physical-consistency constraints were satisfied. Numeric values for the identified parameters are provided in the supplementary material.}}

After identification, the RMS errors from \eqref{eq:transmissionRegressor} with the {\em next} 10,000 samples in the dataset were $1.48$, $1.69$, and $1.16$~Nm on the ab/ad, hip, and knee respectively. For comparison, Coulomb friction was found as $\mathbf{B}_c = {\rm diag}(3.12, 1.25, 0.95)$ Nm. Figure \ref{fig:tracking} shows validation results. 
\q{tauMeasured}{\revone{The measured $\btau$, used as input to the algorithm, shows high-frequency noise. This noise is from finite-differenced encoder signals used in online feedback. In contrast, the estimated data used non-causally filtered signals to compute $\vY$ offline. This results in a comparatively smoother identified estimate.}} 
\q{frictionTracking}{\revone{The estimation is notably poor on the ab/ad joint at 17.5s. However, this occurs at a time when the joint is not moving, and the sign of the Coulomb friction torque cannot be reliably predicted.}}

\revone{Figure \ref{fig:breakdown} shows an accounting of friction and \revone{inertial} effects on the identified model. Effects that dominate the required torques are shown to be dependent on both the motion as well as the joint. This demonstrates the need to identify all model components treated here. It is important to note that these effects will be robot and transmission specific.}

Figure~\ref{fig:generalization} shows the validation error with different training sample sizes $n_s$ and different constraints. As in Remark~\ref{remark:adaptation}, tighter constraints empirically result in an accurate model more rapidly as samples are added. \revone{Lower validation error for tighter constraints further demonstrates reduced overfitting.} These benefits are in addition to the fact that tighter constraints result in physically realistic model parameters. 

\begin{remark}
\vspace{-3px}
\q{scaleRemark}{\revone{
Although these results have only addressed a 3-DoF leg, physical-consistency constraints  are equally applicable to more complex rigid-body systems. Previous work demonstrated the scalability of least-squares identification to high-DoF platforms \cite{Ayusawa14,Jovan16}. The $4\times4$ LMIs proposed would present minor added overhead to the solution of these problems. As in Remark \ref{remark:adaptation}, benefits from our tightened constraints are expected to increase with the number of bodies.}}\vspace{-3px}
\end{remark}

\begin{figure}
\center
\includegraphics[width = \columnwidth]{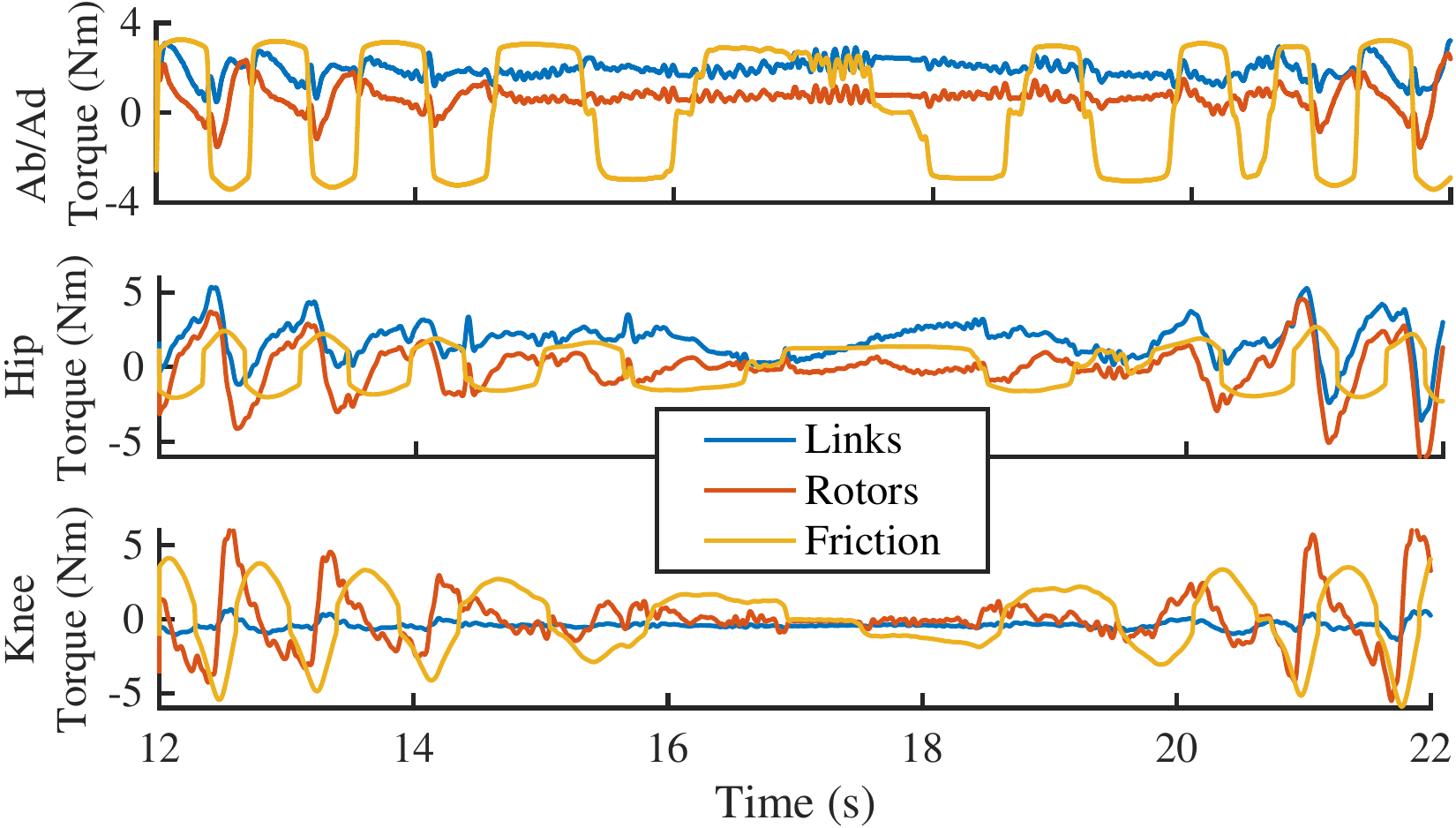}\vspace{-6px}
\caption{\revone{Identified contributions of the effects from link inertias, rotor inertias, and friction. Data is from the same experiment as Figure \ref{fig:tracking}.}}
\label{fig:breakdown}
\vspace{-3px}
\end{figure}

\begin{figure}
\center
\includegraphics[width = \columnwidth]{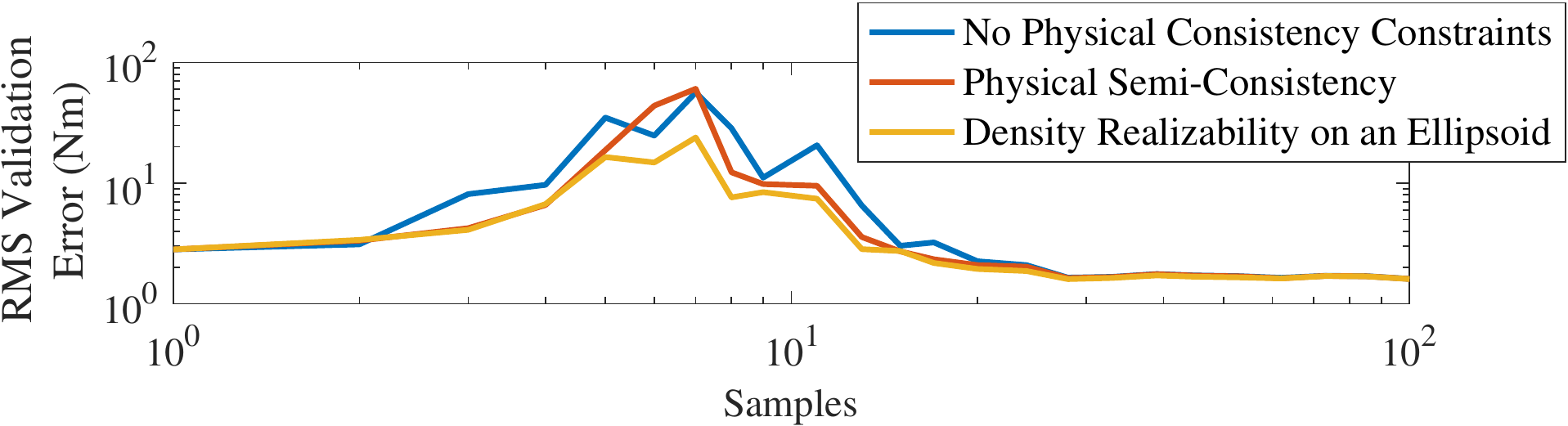}\vspace{-6px}
\caption{Validation error (after identification) versus training sample size.}
\label{fig:generalization}
\vspace{-10px}
\end{figure}


\vspace{-8px}
\section{Conclusions}
\vspace{-0px}
\label{sec:conclusions}
	
	\revone{This paper has introduced LMIs to rigorously characterize physical consistency of rigid-body \revone{inertial} parameters. Rather than focusing on the moments of inertia, physical plausibility is directly assessed using the types of moments encountered in probability and statistics. With this observation, LMIs involving a pseudo-inertia matrix and its density-weighted covariance have been shown to  tightly characterize physical consistency.	
	LMI constraints for density realizability on an ellipsoid were also proposed through connections to the classical problem of moments in mathematics.}

	\q{closing}{\revone{In closing, recall that while intuition was borrowed from probability and statistics, there is nothing stochastic about our results. Given a bounding ellipsoid for a rigid body, the new constraints characterize a cone where its \revone{inertial} parameters must reside with certainty. Beyond least-squares considerations, a quantified treatment of stochasticity can play an important role in inferring models from uncertain measurements, with a rich literature on the topic (e.g.~\cite{Swevers97,Ting06,Janot14,Calafiore00,Poignet05}). The incorporation of rigorous physical-consistency considerations into this inference represents an interesting next step for \revone{inertial} parameter identification.}}

\bibliographystyle{IEEEtran}
\bibliography{InertialLMIs}


\end{document}